\newtheorem{theorem}{Theorem}
\newtheorem{lemma}[theorem]{Lemma}
\newtheorem{corollary}{Corollary}[theorem]
\theoremstyle{definition}
\newtheorem{definition}{Definition}[section]
\theoremstyle{remark}
\newcommand{\numeral}[1]{%
  \textup{\uppercase\expandafter{\romannumeral#1}}%
}
\newcommand\reallywidehat[1]{%
\savestack{\tmpbox}{\stretchto{%
  \scaleto{%
    \scalerel*[\widthof{\ensuremath{#1}}]{\kern.1pt\mathchar"0362\kern.1pt}%
    {\rule{0ex}{\textheight}}
  }{\textheight}%
}{2.4ex}}%
\stackon[-6.9pt]{#1}{\tmpbox}%
}
\title{
Increasing the efficiency of randomized trial estimates via linear adjustment for a prognostic score
}
\author[1]{Alejandro Schuler\thanks{aschuler@unlearn.ai}}
\author[1]{David Walsh}
\author[1]{Diana Hall}
\author[1]{Jon Walsh}
\author[1]{Charles Fisher}
\affil[1]{Unlearn.AI, Inc., San Francisco, CA}
\author[ ]{for the Critical Path for Alzheimer's Disease\thanks{Data used in the preparation of this article were obtained from the Critical Path Institute's Critical Path for Alzheimer's Disease (CPAD) consortium. As such, the investigators within CPAD contributed to the design and implementation of the CPAD database and/or provided data, but did not participate in the analysis of the data or the writing of this report.}}
\author[ ]{the Alzheimer's Disease Neuroimaging Initiative\thanks{Data used in preparation of this article were obtained from the Alzheimer’s Disease Neuroimaging Initiative (ADNI) database (\href{url}{adni.loni.usc.edu}). As such, the investigators within the ADNI contributed to the design and implementation of ADNI and/or provided data but did not participate in analysis or writing of this report. A complete listing of ADNI investigators can be found in \href{http://adni.loni.usc.edu/wp-content/uploads/how_to_apply/ADNI_Acknowledgement_List.pdf}{this document}.}}
\author[ ]{the Alzheimer's Disease Cooperative Study\thanks{Data used in preparation of this manuscript/publication/article were obtained from the University of California, San Diego Alzheimer’s Disease Cooperative Study. Consequently, the ADCS Core Directors contributed to the design and implementation of the ADCS and/or provided data but did not participate in analysis or writing of this report.}}
\date{\today}
\begin{document}

\newcommandx{\E}[2][1]{\mathbb E_{#1} \left[#2\right]}
\newcommandx{\V}[2][1]{\mathbb V_{#1} \left[#2\right]}
\newcommandx{\C}[2][1]{\mathbb C_{#1} \left[#2\right]}\newcommand{\iid}[0]{\overset{\text{IID}}{\sim}}
\newcommandx{\Ehat}[2][1]{\widehat{\mathbb E}_{#1} \left[#2\right]}
\newcommandx{\Vhat}[2][1]{\widehat{\mathbb V}_{#1} \left[#2\right]}
\newcommandx{\hattilde}[1]{\widehat{\widetilde{#1}}}
\newcommandx{\pto}[0]{\overset{p}{\to}}

\maketitle

\begin{abstract}

Estimating causal effects from randomized experiments is central to clinical research. Reducing the statistical uncertainty in these analyses is an important objective for statisticians. Registries, prior trials, and health records constitute a growing compendium of historical data on patients under standard-of-care that may be exploitable to this end. However, most methods for historical borrowing achieve reductions in variance by sacrificing strict type-I error rate control. Here, we propose a use of historical data that exploits linear covariate adjustment to improve the efficiency of trial analyses without incurring bias. Specifically, we train a prognostic model on the historical data, then estimate the treatment effect using a linear regression while adjusting for the trial subjects' predicted outcomes (their \textit{prognostic scores}). We prove that, under certain conditions, this prognostic covariate adjustment procedure attains the minimum variance possible among a large class of estimators. When those conditions are not met, prognostic covariate adjustment is still more efficient than raw covariate adjustment and the gain in efficiency is proportional to a measure of the predictive accuracy of the prognostic model {above and beyond the linear relationship with the raw covariates}. We demonstrate the approach using simulations and a reanalysis of an Alzheimer's Disease clinical trial and observe meaningful reductions in mean-squared error and the estimated variance. Lastly, we provide a simplified formula for asymptotic variance that enables power calculations that account for these gains. Sample size reductions between 10\% and 30\% are attainable when using prognostic models that explain a clinically realistic percentage of the outcome variance.

\end{abstract}

\section{Introduction}

The goal of much clinical research is to estimate the effect of a treatment on an outcome of interest (causal inference) \cite{maldonado}. The randomized trial is the gold standard for causal inference because randomization cancels out the effects of any unobserved confounders in expectation \cite{Sox:2012hu, Overhage:2013fx, Hannan:2008gh}. Although unobserved confounding is not a concern in randomized studies, we must still contend with the statistical uncertainty inherent to finite samples if we conduct our work in a population inference framework. Because of this, methods for the analysis of trial data should be chosen to safely minimize the resulting statistical uncertainty about the causal effect. 

For a given treatment effect estimator and data-generating process, sample size is the primary determinant of sampling variance and power. Therefore the most straightforward method to reduce sampling variance is to run a larger trial that includes more subjects. However, trial costs and timelines typically scale with the number of subjects, making large trials economically and logistically challenging. Moreover, ethical considerations suggest that human subjects research should use the smallest sample sizes possible that allow for reliable decision making. 

As most clinical trials compare an active treatment to a standard-of-care, often in combination with a placebo, there is a possibility to use existing control arm data\footnote{
Treatment-arm historical data is usually more difficult to come by, especially if the active treatment under consideration is experimental and has not been previously tested. However the approach we propose is generic enough to handle relatively arbitrary historical data. The improvements in efficiency will depend on how similar the historical data-generating process is to the current one.
} to augment clinical trials and reduce variance. Such ``historical borrowing'' methods are becoming increasing attractive as the creation of large, electronic patient datasets in the past decade facilitates this process by making it easier to find a suitably matched historical population. Various approaches ranging from directly inserting subjects from previous studies  into the current sample to using previous studies to derive prior distributions for Bayesian analyses have been proposed \cite{kopp2020, 10.1002/sim.6728, lim, baker2001}. Although such methods do generally increase power, they cannot strictly control the rate of type I error \cite{ghadessi, baker2001, kopp2020}. 

Here, we describe an alternative approach that exploits machine learning models and historical control arm data to decrease the uncertainty in effects estimated from randomized trials without compromising strict type-I error rate control in the large-sample setting. The gist of our proposal is to use the historical data to train a prognostic model that predicts a patient's outcome given their baseline covariates. This prognostic model is applied to all patients in the current trial in order to generate a prediction of their outcome (their ``prognostic score'' \cite{Hansen:2008cw, aikens, Wyss:2014ef}). The score is then adjusted for as a covariate in a (linear) regression model of the trial outcome in order to estimate the treatment effect. This amounts to adding a single (constructed) adjustment covariate into an adjusted analysis. As such, it poses no additional statistical risk\footnote{
\label{foot:randomization-inference}
There is some confusion and subtlety around this point. \citet{Freedman:2008738} is sometimes referenced to claim that covariate adjustment can incur bias, but in fact, the presence of treatment interaction terms resolves the issue \cite{Lin:2013738}. Moreover, both of these results apply only to the ``randomization inference'' setting where the covariates and potential outcomes are considered fixed (i.e. randomness arises only from treatment assignment, not sampling). We use the more-common population inference framework in this work, in which these criticisms do not apply \cite{rosenblum-glm}.
}
over any other pre-specified adjusted analyses, which are preferable to unadjusted analyses in almost every case \cite{kahan, raab, yang-tsiatis, Lin:2013738}. Our approach is entirely pre-specifiable, is generic enough to be integrated into many analysis plans, and is supported by regulatory guidance \cite{ema-covar-adjustment}. Moreover, there are no practical restrictions on the type of prognostic model used, enabling one to leverage machine learning-based methods that can learn nonlinear predictive models from large databases.

Below, we show that this prognostic covariate adjustment procedure attains the minimum possible asymptotic variance among ``reasonable'' estimators as long as the prognostic model improves with more data and the effect of treatment is constant. The uncertainty in the estimate for the treatment effect is minimized when the prognostic model predicts the control potential outcome of each subject. However, one can realize gains in efficiency even with imperfect prognostic models or in the presence of heterogeneous effects. In general, our procedure decreases the variance of the estimated treatment effect in proportion to the squared correlation of the prognostic model with the outcome while guaranteeing unbiasedness, control of type-I error rate, and confidence interval coverage. We demonstrate the efficiency gain in simulations and through a reanalysis of a previously reported clinical trial studying the effect of docosahexaenoic acid (DHA) on cognition in patients with Alzheimer's Disease.

Although we are (to our knowledge) the first to formally characterize it, prognostic covariate adjustment has been used in trials for a long time. The baseline covariates in a trial are often “atomic” measurements such as sex, age, or lab values, but composite or computed covariates such as body mass index, Charlson comorbidity index, or Framingham risk score, are also frequently used \cite{cv-risk-scores, austin-comorbidity-score, Ambrosius2014, raab, kahan}. These ``indices'' or ``scores'' are usually the output of a simplified prognostic model that has been learned (at least implicitly) from historical data. For instance, the Framingham cardiovascular risk score was developed by training Cox and logistic regression models using a large community-based cohort to obtain a single covariate that is highly predictive of cardiovascular outcomes. From that perspective, our proposed approach is a formalization of what has previously been an ad-hoc procedure.

The novel contributions of this paper are therefore threefold. Firstly, we provide a formal characterization of prognostic covariate adjustment. Although this process has already been used in trials (if one considers a baseline outcome measurement or risk score as a rudimentary prognostic score), it has not been formally described as a method for leveraging historical data to improve the efficiency of a proposed trial. Our second novelty is an asymptotic proof that shows prognostic covariate adjustment is semiparametric efficient {if the effect of treatment is constant, the historical data follow the same distribution as the trial control arm, and the prognostic model improves with the amount of historical data}. Roughly translated, this means that the power of a trial using prognostic covariate adjustment will be higher or equal to the power of any other trial design that controls type I error. Thirdly, we provide a method of sample size estimation that allows for the design of smaller trials with prognostic covariate adjustment that maintain their power. The formula we derive coincides in certain special cases with previously known results, but to our knowledge our derivation is more general and rests on fewer assumptions than what is available elsewhere in the literature \cite{Borm2007-ep}.

\section{Setting and Notation}
Our setting is a two-arm randomized clinical trial. Denote the outcome for subject $i$ in the clinical trial with $Y_i$, their baseline covariates with $X_i$, and their treatment assignment with $W_i$. The trial dataset is a set of $n$ tuples $(X_i, W_i, Y_i)$, which we denote $(\bm X, \bm W, \bm Y) \in \mathcal X^n \times \{0,1\}^n \times \mathbb R^n$ (we use boldface $\bm A$ to denote a vector of random variables,  each associated with one observation in the dataset). Throughout the paper we assume a continuous outcome. Let $Y_0$ and $Y_1$ be the control and treatment potential outcomes of the subjects in the trial, respectively, and let $\bm Y_{\bm W} = \bm W \bm Y_1 + (1-\bm W) \bm Y_0$ \cite{Rubin2005}. Our structural assumption about the trial is,
\begin{equation}
 P(\bm X, \bm W, \bm Y, \bm Y_0, \bm Y_1) = 
 \bm 1(\bm Y = \bm Y_{\bm W})
 P(\bm W) 
 \prod_i P(X_i, Y_{0,i}, Y_{1,i}) \, .
 \label{rct-structure}
\end{equation}
In other words, a) the observed outcomes are the potential outcomes corresponding to the assigned treatment, b) the treatment is assigned independently of observed or unobserved baseline covariates and independently of potential outcomes, and c) our subjects are independent of each other. In addition to being independent, we also assume the subjects are identically distributed, i.e., $(X_i, Y_{0,i}, Y_{1,i}) \iid P(X, Y_{0}, Y_{1})$. 

We denote the population average outcomes under each treatment condition $w$ as $\mu_w = \E{Y_w}$ and the conditional means as $\mu_w(X) = \E{Y_w|X}$. In general, a ``treatment effect'' is any function of these marginal means, i.e. $\tau = r(\mu_0, \mu_1)$, but when we say treatment effect here we specifically mean the difference in means, $\tau = \mu_1 - \mu_0$.

Finally, we denote the treatment indicators as $W_{1,i} = W_i$ and $W_{0,i} = 1-W_i$ to allow for symmetric notation. Let $\pi_{1} = P(W_{1}=1)$ and let $\pi_0 = P(W_{0}=1)$ be the probability that a subject is assigned to the treatment or control arm in the trial, respectively. In simple randomized experiments, these are constants that apply to all subjects.

In what follows, abbreviate the usual empirical (sample) average of IID variables $A_1 \dots A_n \sim A$ with the notation $\Ehat{A} = \frac{1}{n} \sum A_i$ (or $\bar A$). Denote an empirical conditional average $\Ehat{A|B=b} = \frac{1}{n_b} \sum_{B_i=b} A_i$ with $n_b$ the number of observations where $B_i=b$. Let $\tilde A = A - \E A$ (or $\tilde A = A - \Ehat A$) be centered (or empirically centered) versions of the random variable $A$, with usage clear from context or otherwise noted. Let $\V{A}$ denote the variance of $A$ and $\C{A,B}$ denote the covariance between $A$ and $B$. 

When we describe ``asymptotic'' properties of an estimator in all cases we are referring to the asymptote where the number of observations $n$ is increasing while other properties of the data-generating process remain fixed. As usual, the ``asymptotic varaince'' $\nu^2$ of an asymptotically normal estimator $\hat\tau_n$ of a parameter $\tau$ refers to the variance of the limiting distribution of the sequence $\sqrt{n}(\hat\tau_n - \tau) \rightsquigarrow N(0,\nu^2)$. We omit the subscript $n$ where the relevant sequence is clear from context.

\section{Prognostic Covariate Adjustment}
\label{sec:approach}

At a high level, our approach to historical borrowing (which we call prognostic covariate adjustment) is to use the historical data to learn a prognostic model, then apply it to the trial to generate an additional adjustment covariate. Specifically, let $\mathcal M$ be some learning algorithm (e.g., a linear regression, random forest, deep neural network, etc.) which, when trained on our historical dataset $(\bm X', \bm Y')$, outputs a ``fit'' predictive model, $m: \mathcal X \to \mathcal Y'$. {We assume throughout that the historical and trial data are statistically independent. This is natural since we already assume that the individual observations \textit{within} each dataset are independent, but does preclude scenarios where a subject present in the historical data later enters the trial, for example.}

The treatment effect estimate we will use is consistent for any choice of prognostic model, so we can discuss the method in great generality without assuming the prognostic model takes a specific form or attains a certain level of predictive performance. That said, it is particularly interesting to consider the ``best case scenario'' in which the historical data are representative of the trial control arm, i.e. $P(X',Y') = P(X,Y_0)$. It will be later be shown that the prognostic model that endows our estimator with minimum variance is the conditional mean, $m(X) = \E{Y_0|X}$, which is the same as $m(X') = \E{Y'|X'}$ in the best case scenario. Therefore, the construction of the optimal prognostic score involves estimating a function equivalent to the conditional mean of the outcome given the covariates in the historical sample of controls. As such, constructing the optimal prognostic model is a standard machine learning problem.  In the results that follow, we make no presumption about the particular choice of $\mathcal M$. We refer to $m(\cdot)$ as a \textit{prognostic model} and to the quantity $M_i = m(X_i)$ as the \textit{prognostic score}. \citet{Hansen:2008cw} defines the prognostic score as any quantity $f(X)$ which induces $Y_0 \perp X | f(X)$. In the literature, the prognostic score is often treated more informally as the expected value of the control outcome given the baseline covariates, which motivates our terminology here.

While it is optimal to construct a prognostic model that accurately approximates $m(X) = \E{Y_0|X}$, this is not strictly necessary for using our proposed estimator. Indeed, there may be situations in which this isn't possible in practice. For example, the historical sample may be too small to reliably learn the relationship, the distribution of the covariates in the historical population may not reflect the trial population (i.e. a ``domain shift''), or the outcome in the trial may not even have been measured in the historical data (e.g., a biomarker measured with a new technology). In this latter case, we refer to $Y'$ as a ``surrogate'' outcome for $Y$. Whatever the reason, the proposed estimator will be unbiased, retain control of the type-I error rate, and decrease variance (to some degree) even if the prognostic model does not accurately approximate the control potential outcomes.

In general, we can assume that the prognostic model, $m$, is given at the time of the trial. To analyze the trial data, we first use the prognostic model to generate the prognostic score, $M_i = m(X_i)$, for each subject given their baseline covariates, $X_i$. Then, we estimate the treatment effect using a linear regression adjusted for the empirically centered covariates, prognostic score, and their interactions with the treatment\footnote{
Theorem \ref{thm:comparison} in the appendix shows that including interactions is necessary to ensure that the estimator is more efficient than difference-in-means estimation. The estimated coefficients of these interactions are of no interest to us here--they are merely a tool to reduce variance in estimation of the main treatment effect. There is a separate and large literature on the estimation of heterogeneous treatment effects, of which testing for linear treatment-covariate interactions is one small part.
}. Letting 
$
Z^\top = [1, \tilde W, \tilde X^\top, \tilde M^\top, \tilde W\tilde X^\top, \tilde W\tilde M^\top] 
$
be the regressors
\footnote{

If the prognostic score $M$ is identically equal to any of the covariates or is numerically constant, we omit including it in the regression since in this case it cannot do anything to reduce variance.
}
, we fit $\E{ Y_i \big|  Z_i} = 
Z_i^\top \beta$ using ordinary least squares to obtain the fit coefficients, $\hat \beta$. Our estimate of the treatment effect is $\hat\tau = \hat\beta_W$, i.e., the coefficient corresponding to the $W$ term in the regression. {This specification is directly based on the ``ANCOVA II'' estimator analyzed in \citet{yang-tsiatis}.} It is well-known that this is a consistent and asymptotically normal estimator of the treatment effect when treatment is randomized, even if the regression is misspecified (i.e., the true relationship is nonlinear)\footnote{See footnote \ref{foot:randomization-inference}.}
\cite{wang, leon, yang-tsiatis}.

We estimate sampling variance with the usual ``sandwich'' estimator 
$
\widehat{\V{\hat\tau}} = 
(\bm Z \bm Z^\top)^{-1} 
DD^T
(\bm Z \bm Z^\top)^{-1}
$ 
in which $D = \bm Z\operatorname{diag}(\bm Y - \bm Z^\top \hat\beta)$. This variance estimator is consistent whether or not the regressions are correctly specified and is consequently robust against deviations from linearity and homoscedasticity \cite{agnostic-stats}. This guarantees strict (large-sample)\footnote{In practice, it may be advisable to employ corrections for this estimator if one is working with a small sample. The estimators known as HC1, HC2, and HC3 are all reasonable options that make minor changes to the matrix $D$, e.g. $D_{\text{HC1}} = \sqrt{\frac{n}{n-p}}D$. See \citet{long-hc3}.} type-I error rate control based on p-value cutoffs and valid confidence intervals in practically all cases (i.e., as long as $P(Y_0,Y_1,X)$ satisfies mild regularity conditions). In particular, the statistical validity of the inference is not impacted by the nature of the prognostic score $M$ because it enters into the analysis like any other covariate.

\subsection{Statistical Properties}

{
\subsubsection{Optimality Under Constant Effects}

Although prognostic covariate adjustment provides valid inference (i.e. proper coverage and asymptotic unbiasedness) under general conditions, we first motivate it by showing that it is in fact \textit{optimal} under the assumption of a constant treatment effect and certain assumptions about the prognostic model.
}

\begin{theorem}
\label{thm:opt-main}
Presume a constant treatment effect $\mu_1(X) = \mu_0(X) + \tau$. Then the linear prognostic covariate adjustment procedure that uses $m(X) = \E{Y_0|X}$ as the prognostic model has the lowest possible asymptotic variance among all regular and asymptotically linear estimators with access to the covariates $X$. 
\end{theorem}

This is restated and proved as corollary \ref{thm:constant-effect} in the appendix. Corollary \ref{thm:constant-effect-ancovaI} shows the statement also holds when the interaction terms are omitted from the working regression model. All practical and reasonable estimators in the context of trial analyses with continuous outcomes are regular and asymptotically linear,\footnote{
Regularity and asymptotic linearity are specific technical conditions. Definitions may be found in \citet{Tsiatis:2007vl}.
} so this result means linear adjustment for the true control conditional mean is, in some sense, the absolute ``best'' possible estimation procedure \cite{Tsiatis:2007vl}. Since the prognostic model is trained to approximate $\E{Y'|X'}$, it is best for the historical data to be drawn from the same distribution as the trial control arm so that $\E{Y'|X'} = \E{Y_0|X}$. 

We can in fact weaken our assumption that the prognostic score is perfect ($m(X) = \E{Y_0|X}$) and still obtain the same conclusion as long as the prognostic model approaches the truth as the size of the external dataset increases. The proof of the following is also given in the appendix (theorem \ref{thm:asymptotic}).

\begin{theorem}
Presume $X$ has compact support and there is a constant treatment effect: $\mu_1(X) = \mu_0(X) + \tau$ with $|\mu_0(x)| < b$ bounded. Let $m(x)$ be a (random) function learned from the external data $(\bm Y', \bm X')_{n'}$ such that $|m(x)| < b$ is also bounded and {$|m(X) - \mu_0(X)| \overset{L_2}{\to} 0$ so that the learned model approaches the truth in mean-squared error} as $n' \to \infty$. If the number of trial samples $n$ grows in tandem with the size of the historical data $n'$ (i.e. $n = O(n')$), then the linear prognostic covariate adjustment estimator that uses the learned model $m(X)$ in the role of $X$ has the lowest possible asymptotic variance among all regular and asymptotically linear estimators with access to the covariates $X$. 
\end{theorem}

This also holds without the use of interaction terms in the regression (corollary $\ref{thm:asymptotic-I}$). Of course, $m(X) = \E{Y_0|X}$ will never hold for any given prognostic model, either because the model is not perfectly learning the relationship or because the training data are not entirely representative of the trial control arm. Despite this, the theorem means that our statement of optimality is not meaningless because with enough external data we will come closer and closer to attaining the minimum possible variance. Note that the condition that {$|m(X) - \mu_0(X)| \overset{L_2}{\to} 0$} is relatively weak and justifies the use of a number of machine learning algorithms $\mathcal M$ to learn the prognostic model \cite{convergence-L2boost, convergence-lasso, convergence-nn, convergence-rf}.

{
\subsubsection{Superiority Over Covariate Adjustment Without Prognostic Score}

It is not always reasonable to assume a constant effect. However, we can show that even if it is not optimal among all estimators, prognostic covariate adjustment still retains an advantage over standard covariate adjustment (def. \ref{def:diff}). This holds for any fixed prognostic model $m(x)$.

\paragraph{A Note on Interaction Terms} The following results require the aforementioned interaction terms to be present in the regression model. If interactions are omitted, the results below only hold if either the treatment effect is constant or the randomization is 1:1. Without these conditions and without interaction terms, there are cases in which adding the prognostic score (or any covariate) could actually increase the asymptotic variance (thm \ref{thm:comparison}). These cases may be rare in practice but either way the problem can be entirely avoided simply by including interactions. Theorem \ref{thm:comparison} also shows that these interactions are always useful even without the prognostic score.
}

\begin{theorem}
\label{thm:improvement}
Assume only mild regularity conditions to ensure that the usual sandwich estimator is consistent. Consider linearly adjusting (with treatment-covariate interactions) for covariates $X$ with variance $\V{X} = \Sigma_x$ and covariance with $Y_w$ of $\C{Y_w, X} = \xi_{w,x}$ vs. a set of covariates $[X, M]$ ($M \in \mathbb R$) {(again using interactions)} with $\C{X,M} = \zeta$, $\V{M} = \sigma_m^2$ and $\C{Y_w, M} = \xi_{w,m}$. Let $\xi_{m*} = \pi_0 \xi_{1,m} + \pi_1 \xi_{0,m}$ and $\xi_{x*} = \pi_0 \xi_{1,x} + \pi_1 \xi_{0,x}$. { Assume $M$ is not a linear combination of the variables in $X$.} The reduction in asymptotic variance from including the prognostic score $M$ in the regression is always {nonnegative} and given by

\begin{align}
    \left(
    \frac{1}{\pi_0 \pi_1} 
    \right)
    \frac{
        (\xi_{m*} - \xi_{x*}^\top \Sigma_x^{-1} \zeta)^2
    }{
        \sigma_m^2 - \zeta^\top \Sigma_x^{-1} \zeta
    }
\end{align}
\end{theorem}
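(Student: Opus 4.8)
The plan is to reduce the statement to a single formula for the asymptotic variance of the fully interacted OLS estimator of Section~\ref{sec:approach} as a function of whatever covariate vector $V$ it adjusts for, and then to subtract that formula evaluated at $V=X$ from its value at $V=[X,M]$.

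First I would write down the influence function of $\hat\tau$ when it adjusts for a generic $V$. Put $\Sigma_V=\V{V}$, let $b_w=\Sigma_V^{-1}\C{Y_w,V}$ be the population within-arm regression slopes, and let $\epsilon_w=Y_w-\mu_w-(V-\E{V})^\top b_w$ be the associated residuals. The usual $M$-estimation expansion for the normal equations of the regression (as in \cite{wang,leon,yang-tsiatis}), together with a delta-method term accounting for the fact that $V$ is centered at the sample mean rather than at $\E{V}$, yields
\[
\phi_V = \frac{W}{\pi_1}\epsilon_1 - \frac{1-W}{\pi_0}\epsilon_0 + (V-\E{V})^\top(b_1-b_0).
\]
The last summand is exactly the delta-method contribution: the coefficient on $W$ in the regression centered at a fixed point $c$ consistently estimates the best-linear ``treatment effect at $V=c$'', whose derivative in $c$ is $(b_1-b_0)^\top$, and $c=\bar V$ has influence function $V-\E{V}$. (This term is also why the treatment--covariate interactions are needed; cf.\ Theorem~\ref{thm:comparison}.) Because $W\perp(V,Y_0,Y_1)$ and the regression residuals satisfy $\C{\epsilon_w,V}=0$, the cross term between $(V-\E{V})^\top(b_1-b_0)$ and the inverse-probability part vanishes, so $\V{\phi_V}=\frac{\V{\epsilon_1}}{\pi_1}+\frac{\V{\epsilon_0}}{\pi_0}+(b_1-b_0)^\top\Sigma_V(b_1-b_0)$.

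Next I would collapse this into a single quadratic form. Substituting $\V{\epsilon_w}=\V{Y_w}-\C{Y_w,V}^\top\Sigma_V^{-1}\C{Y_w,V}$ and $b_w=\Sigma_V^{-1}\C{Y_w,V}$, expanding $(b_1-b_0)^\top\Sigma_V(b_1-b_0)$, and gathering the terms in $\C{Y_0,V}$ and $\C{Y_1,V}$ (the weights $1-1/\pi_w$ together with the cross term combine into one perfect square), everything reorganizes to
\[
\V{\phi_V} = \frac{\V{Y_1}}{\pi_1}+\frac{\V{Y_0}}{\pi_0}-\frac{1}{\pi_0\pi_1}\,\xi_{v*}^\top\Sigma_V^{-1}\xi_{v*},\qquad \xi_{v*}:=\pi_0\,\C{Y_1,V}+\pi_1\,\C{Y_0,V}.
\]
Hence the reduction in asymptotic variance over difference-in-means from adjusting for $V$ is $\frac{1}{\pi_0\pi_1}\xi_{v*}^\top\Sigma_V^{-1}\xi_{v*}\ge 0$, and in the theorem's notation $\xi_{v*}=\xi_{x*}$ when $V=X$ while $\xi_{v*}=(\xi_{x*}^\top,\xi_{m*})^\top$ when $V=[X,M]$.

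Subtracting the two instances, the reduction from including $M$ equals $\frac{1}{\pi_0\pi_1}$ times
\[
\begin{pmatrix}\xi_{x*}^\top & \xi_{m*}\end{pmatrix}
\begin{pmatrix}\Sigma_x & \zeta\\ \zeta^\top & \sigma_m^2\end{pmatrix}^{-1}
\begin{pmatrix}\xi_{x*}\\ \xi_{m*}\end{pmatrix}
- \xi_{x*}^\top\Sigma_x^{-1}\xi_{x*}.
\]
Writing out the $2\times2$ block inverse through the scalar Schur complement $s=\sigma_m^2-\zeta^\top\Sigma_x^{-1}\zeta$, the $\xi_{x*}^\top\Sigma_x^{-1}\xi_{x*}$ term cancels and what remains is $\frac{1}{s}\big(\xi_{m*}-\xi_{x*}^\top\Sigma_x^{-1}\zeta\big)^2$, which is the claimed expression. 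Positivity is then automatic: $s>0$ is the variance of the part of $M$ orthogonal to $X$ (nonzero whenever $[X,M]$ has nonsingular covariance), and the numerator is a square. I expect the main obstacle to be the first part: correctly obtaining $\phi_V$ for the \emph{empirically} centered interacted estimator --- in particular not dropping the $(V-\E{V})^\top(b_1-b_0)$ delta-method term --- and the ensuing bookkeeping that turns $\V{\phi_V}$ into the compact $\xi_{v*}$ identity. Given that identity, the theorem is just the block-matrix inversion lemma.
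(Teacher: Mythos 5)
Your proof is correct, and its overall skeleton coincides with the paper's: reduce everything to the asymptotic variance formula for the fully interacted (ANCOVA II) estimator, namely $\frac{\sigma_0^2}{\pi_0}+\frac{\sigma_1^2}{\pi_1}-\frac{1}{\pi_0\pi_1}\xi_{v*}^\top\Sigma_V^{-1}\xi_{v*}$ with $\xi_{v*}=\pi_0\C{Y_1,V}+\pi_1\C{Y_0,V}$, and then obtain the stated reduction by block inversion of $\V{[X,M]}$ via the Schur complement $s=\sigma_m^2-\zeta^\top\Sigma_x^{-1}\zeta$ --- exactly the ``matrix algebra'' step in the paper's Corollary~\ref{thm:add-covar}. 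Where you differ is in how you reach that intermediate formula: the paper invokes Rosenblum's lemma (Lemma~\ref{thm:rosenblum}, via Robins' semiparametric results) and computes the expected-likelihood-maximizing coefficients $\hat\beta^*$ explicitly for ANCOVA~I, then asserts the ANCOVA~II influence function by analogy (Theorem~\ref{thm:ancovaii-var}); you instead derive the influence function directly by writing the interacted estimator as the difference of arm-specific OLS fits evaluated at $\bar V$ and adding the delta-method term $(V-\E{V})^\top(b_1-b_0)$ for the empirical centering. The two routes produce the same $\psi$ (your $\phi_V$ rearranges to the paper's eq.~\ref{eq:ancovaii-if} since $\pi_0 b_1+\pi_1 b_0=\Sigma_V^{-1}\xi_{v*}$), so your argument is a more elementary, self-contained derivation of the key lemma, while the paper's buys generality from the established semiparametric machinery. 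A minor point in your favor: identifying the denominator as the residual variance of $M$ given $X$ is a cleaner positivity argument than the paper's determinant computation; like the paper, though, you should say ``nonnegative'' rather than ``always positive,'' since the numerator can vanish (e.g., when $M$ is a linear function of $X$).
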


This is restated and proved as corollary \ref{thm:add-covar} in the appendix. The conclusions are that (i) including a prognostic score as a covariate should never hurt the asymptotic variance and (ii) the efficiency gain depends on how well correlated the prognostic score is with the outcome {(above and beyond any correlation with the raw covariates)}. Prognostic models that are better correlated with the outcome thus offer larger efficiency gains, but the presence of any correlation at all could still decrease the variance. This justifies the use of prognostic covariate adjustment for surrogate outcomes (i.e. when $Y'$ and $Y$ represent different but, perhaps correlated, outcomes) and in cases with heterogeneous treatment effects. { It is worth mentioning that the prognostic score must be a nonlinear function of the included covariates $X$ for there to be any benefit. In fact, there would not be a unique solution for the coefficients of the regression if the prognostic score and covariates were exactly colinear.}

{
\subsubsection{Pragmatic Sample Size Calculation}
}

Our primary goal in this paper is to demonstrate how prognostic covariate adjustment  decreases the variance for a fixed trial relative to raw covariate adjustment. However, it is also useful to consider how knowledge of this efficiency gain could be exploited to conduct \textit{smaller} trials that attain a desired level of confidence. Since the asymptotic variance of the estimate determines the power of the trial, smaller trials using prognostic covariate adjustment may attain equal power to a larger trial using raw covariate adjustment. Power must be estimated \textit{before} running the trial in order to gauge the number of subjects to be enrolled. This requires a formula for sampling variance. Although theorem \ref{thm:improvement} gives a precise quantification of the efficiency gained by including the prognostic score, it may be impractical to calculate this quantity because one would need to estimate a potentially large number of population parameters (e.g., the entries of $\Sigma_x$). Here we provide an upper bound on the resulting sampling variance that uses fewer population parameters and is therefore easier to apply in a prospective setting.

\begin{theorem}
\label{thm:pwr}
Assume only mild regularity conditions to ensure that the usual sandwich estimator is consistent. Given an arbitrary, fixed prognostic score $M=m(X)$, the asymptotic variance of our proposed estimation procedure is no greater than 
\begin{equation}
\label{eq:var-procova}
  \frac{\sigma_0^2}{\pi_0} 
+ \frac{\sigma_1^2}{\pi_1}
- \pi_0 \pi_1
\left( 
      \frac{\rho_1\sigma_1}{\pi_1} 
    + \frac{\rho_0 \sigma_0}{\pi_0} 
\right)^2
\end{equation}
where $\sigma_w = \V{Y_w}$ and $\rho_w$ is the population correlation between $M$ and $Y_w$, i.e. $\C{M, Y_w}/\sqrt{\V{M} \V{Y_w}}$. This is always less than the asymptotic variance of the unadjusted estimator (an upper bound on the variance of the standard covariate-adjusted estimator that uses the same population parameters), which is
\begin{equation}
\label{eq:var-anova}
  \frac{\sigma_0^2}{\pi_0} 
+ \frac{\sigma_1^2}{\pi_1}
\end{equation}
\end{theorem}

The bound follows directly from \ref{thm:ancovaii-var}, \ref{thm:comparison}, and \ref{thm:add-covar} in the appendix. This bound does not account for any reduction in variance due to adjustment for the raw covariates, $X$. However, if the prognostic model is accurate, the raw covariates are unlikely to provide substantial efficiency gains because their effects are already ``soaked up'' by the prognostic score. In addition, the only population parameters in this bound are the marginal outcome variances and model-outcome correlations in each treatment arm.

In conjunction with estimates of $\sigma^2_w$ and $\rho_w$, this formula may be used to prospectively calculate a lower bound on the power of a clinical trial analyzed with prognostic covariate adjustment. Asymptotic normality means that, in the limit, the probability of a two-sided $p$-value being less than $\alpha$ (i.e. a ``statistically significant'' result) is

\begin{equation}
\label{eq:norm-pwr}
\Phi
    \left(
        \Phi^{-1}(\alpha/2) + 
        \sqrt{n}\frac{\tau}{\nu}
    \right) 
+
\Phi
    \left(
        \Phi^{-1}(\alpha/2) -
        \sqrt{n}\frac{\tau}{\nu}
\right)
\end{equation}
where $\Phi$ is the CDF of the standard normal, $\tau$ is the true (target) treatment effect and $\nu^2$ is the asymptotic variance of whatever asymptotically normal estimator is being used. Composing the variance bound for the prognostic covariate adjustment estimator given in theorem \ref{thm:pwr} with this formula gives an upper bound for the power of trial analyzed with that estimator that depends only on the target effect $\tau$, sample sizes $n_w$, potential outcome variances $\sigma^2_w$, and potential outcome-prognostic score correlations $\rho_w$. The target effect is usually fixed a-priori. The latter two quantities may be estimated using historical data and/or expert opinion (see appendix \ref{appx:pwr}). After fixing them, the sample sizes may be varied in a desired ratio until the desired power (e.g. 80\%) is achieved.

Because of efficiency gains, the required sample size for a trial powered with this method and analyzed with prognostic covariate adjustment will be lower than for a trial powered without exploiting covariate information. We can equate the power for a target effect $\tau$ of an estimator with variance 
given by eq. \ref{eq:var-procova} (upper bound on prognostic covariate adjustment)
to one with variance given by \ref{eq:var-anova} (upper bound on raw covariate adjustment) to algebraically discover the relationship between $n$, the sample size required for a well-powered analysis with raw covariate adjustment, and $n^\dagger$, the sample size required for a well-powered analysis with prognostic covariate adjustment. In the case of a 1:1 randomization ratio ($\pi_0 = \pi_1 = 0.5$), this relationship is

\begin{equation}
\label{eq:1to1-n}
\frac{n^\dagger}{n}
    =
    1 - \frac{(\sigma_0 \rho_0 + \sigma_1 \rho_1)^2}{2(\sigma_0^2 + \sigma_1^2)} 
\end{equation}

With a common variance $\sigma_0 = \sigma_1 =\sigma$ and correlation $\rho_0 = \rho_1 = \rho$ this simplifies to
$
\frac{n^\dagger}{n}
    =
    1 - \rho^2 
$, or, in terms of a percent reduction in sample size from $n$, $
\frac{n-n^\dagger}{n} = \rho^2
$. This is precisely the out-of-sample $R^2$ of the prognostic model $m$. This relationship holds regardless of the value of the target effect and the desired power. The result coincides with the formula from \citet{Borm2007-ep}, although our derivation is more robust because we do not rely on parametric assumptions. A generic relationship that does not assume 1:1 randomization is easily derived in the same fashion.

$R^2$ values can vary wildly depending on the model, outcome, and population, but folk wisdom among biostatisticians has it that values between 0.1 and 0.3 might be expected from a good prognostic model. Those values translate to meaningful sample size reductions between 10\% and 30\%.

Note that the bound in theorem \ref{thm:pwr} is actually generic to any covariate because $M$ enters the estimator the same way any other covariate does. In other words, the result still holds if one substitutes any $X_j$ for $M$ in the theorem. For instance, presume baseline age $X_0$ is known to be correlated with the standard-of-care (control) outcome at a strength of $\rho_{X_0} = 0.2$. Then in a 1:1 trial (presuming common $\sigma^2$ and $\rho$) we could reduce the sample size by 4\% relative to the unadjusted power calculation and still maintain the same design power for the same target effect, all without the need for a dedicated prognostic model\footnote{
A prognostic model will, of course, be expected to approach the maximum possible correlation with the outcome that could be attained as a function of the baseline covariates. Thus there is usually a benefit to using such a model if one is available.
}.

\section{Simulations}

Unlike analyses of real data, analyses of simulated data can be compared to ground truth to gauge error. We used simulation to explore how mean-squared estimation error of the treatment effect varies with and without prognostic covariate adjustment. In particular, we were interested in cases with or without strong effects, non-linearity in the outcome-covariate relationship, heterogeneity in the treatment effect, surrogate outcomes, or distributional shifts between the historical and trial data-generating processes. We show that prognostic covariate adjustment performs better than raw covariate adjustment in all cases where theory expects it to and performs worse in no cases. The amount of improvement from scenario to scenario is in line with intuition gained from theory.

We chose our simulation scenarios to demonstrate the different performance benefits of effect estimation with prognostic covariate adjustment relative to raw covariate adjustment in cases that might come up in real trials. Effect sizes vary from trial to trial but the overall size of the true treatment effect should not change the relative efficiency of prognostic and raw covariate adjustment (i.e. because the asymptotic variances do not depend on the treatment effect) so we chose this scenario to demonstrate the point. Possible nonlinearities must always be contended with in trial analyses. For our purposes, outcomes that are linearly related to the covariates should not benefit from prognostic covariate adjustment above and beyond raw covariate adjustment because all of the information from the raw covariates is exploitable by the linear model. Heterogeneity of effect is always possible and should decrease the advantage of prognostic covariate adjustment to some extent because the prognostic score is less predictive in the treatment arm. Using a surrogate outcome should also decrease the advantage of the prognostic score because its predictive capacity is lessened in both treatment arms of the trial-- this case may arise if the trial outcome is not well-measured in historical data. Lastly, it is always possible that the historical data are not representative of the current trial population. When this is the case, we expect the benefits of prognostic covariate adjustment to be attenuated because the model must extrapolate outside of its trained range to perform on the trial population.

Each of our simulation scenarios is defined by particular choices for the pair of distributions $P(X',Y')$ and $P(X,Y_0, Y_1)$. In all cases, the distribution of covariates in the simulated historical and trial data were 10-dimensional uniform random variables in the prism $[l,h]^{10}$. Distributional shift was modeled by choosing different values of $l$ and $h$ for $P(X')$ and $P(X)$. The distributions $P(Y'|X')$, $P(Y_0|X)$, and $P(Y_1|X)$ were of a Gaussian quadratic-mean form $\mathcal N(aX^\top \mathds 1 X + bX^\top \mathds 1 + c, 1)$ in all scenarios ($\mathds 1$ is a matrix or vector of 1s with appropriate shape implied). The parameter $a$ controls the degree of non-linearity, with $a=0$ representing the linear case. In this context, treatment effect heterogeneity refers to the situation in which $a$ or $b$ is different for $P(Y_0|X)$, and $P(Y_1|X)$, whereas surrogate outcome refers to the situation in which $a$ or $b$ is different for $P(Y'|X')$ and $P(Y_0|X)$. Large constant effects are encoded with different values for $c$ in $P(Y_0|X)$, and $P(Y_1|X)$ while keeping $a$ and $b$ the same. The specific values of $l$, $h$ for each covariate distribution and of $a$, $b$, and $c$ are shown in Table \ref{tbl:sim_scenarios}.

\begin{table}[p]
\centering
\begin{tabular}{|c||c|c|c|c|c|c|c|c|c|c|c|c|c|}
\hline
\multirow{2}{*}{Scenario} &
\multicolumn{2}{c|}{$P(X')$} &
\multicolumn{2}{c|}{$P(X)$} &
\multicolumn{3}{c|}{$P(Y'|X')$} &
\multicolumn{3}{c|}{$P(Y_0|X)$} &
\multicolumn{3}{c|}{$P(Y_1|X)$}
\\ \cline{2-14}
 &
$l'$ & $h'$ &
$l$ & $h$ &
$a'$ & $b'$ & $c'$ &
$a_0$ & $b_0$ & $c_0$ &
$a_1$ & $b_1$ & $c_1$
\\ \hline \hline
\textbf{Baseline} &
\textbf{-1} & \textbf 1 &
\textbf{-1} & \textbf 1 &
\textbf {0.5} & \textbf 1 & \textbf 0 &
\textbf{0.5} & \textbf 1 & \textbf 0 &
\textbf{0.5} & \textbf 1 & \textbf 0
\\ 
Strong Effect &
-1 & 1 &
-1 & 1 &
0.5 & 1 & 0 &
0.5 & 1 & 0 &
0.5 & 1 & \textcolor{purple}5
\\
Linear &
-1 & 1 &
-1 & 1 &
\textcolor{purple}0 & 1 & 0 &
\textcolor{purple}0 & 1 & 0 &
\textcolor{purple}0 & 1 & 0
\\ 
Heterogeneous Effect &
-1 & 1 &
-1 & 1 &
0.5 & 1 & 0 &
0.5 & 1 & 0 &
\textcolor{purple}0 & 1 & 0
\\ 
Surrogate Outcome &
-1 & 1 &
-1 & 1 &
0.5 & \textcolor{purple}{-1} & 0 &
0.5 & 1 & 0 &
0.5 & 1 & 0
\\ 
Covariate Shift &
\textcolor{purple}{-2} & \textcolor{purple}0 &
-1 & 1 &
0.5 & 1 & 0 &
0.5 & 1 & 0 &
0.5 & 1 & 0
\\ \hline
\end{tabular}
\caption{Parameters for all simulation scenarios. Parameters for the baseline scenario are shown in bold. Parameters in the other scenarios that deviate from the baseline parameters are highlighted in purple.}
\label{tbl:sim_scenarios}
\end{table}

The ``baseline'' simulation scenario included some moderate outcome non-linearity, constant treatment effect, and no distributional shift between the historical data and the trial control arm. We tested four variations of this scenario. In the first (``linear'') we examined what happened when the outcome-covariate relationship was precisely linear in both treatment arms. In the second (``heterogeneous effect'') we tested a variation of the baseline scenario where the conditional average effect $\E{Y_1 - Y_0 | X}$ was no longer a constant. In the third (``surrogate outcome'') we tested a variation where the relationship between outcome and covariates in the historical data was not representative of the corresponding relationship in the control arm of the trial, i.e. $P(Y'|X'=x) \ne P(Y_0|X_0=x)$. In the fourth (``covariate shift'') we tested a variation where the historical population was not representative of the trial population in terms of the baseline covariates, i.e. $P(X'=x) \ne P(X=x)$. 

In each simulation scenario, we generated a historical control dataset $(\bm X', \bm Y')$ by drawing 10,000 IID samples from a specified distribution, $P(X', Y') = P(Y'|X')P(X')$. These simulated historical data were used to train to a random forest (1000 trees, with other parameters set to defaults in the python package sklearn \cite{sklearn}) as a prognostic model, $m: \mathcal X \to \mathcal Y$. Then, we simulated a randomized trial dataset $(\bm X, \bm W, \bm Y)$ with 500 subjects, evenly split between treatment and control. The data-generating process for these data involved drawing 500 IID samples from a counterfactual distribution $P(Y_1, Y_0, X) = P(Y_1|X)P(Y_0|X)P(X)$, evenly splitting the sample into treatment and control arms, and then setting $Y=Y_1$ for the treated and $Y=Y_0$ for the controls. Finally, we used the prognostic model to generate the prognostic score, $\bm M = m(\bm X)$, and  analyzed the data using four estimation procedures: unadjusted, covariate-adjusted, covariate-adjusted with prognostic score, and covariate-adjusted with prognostic score sans interactions.\footnote{
``Adjusted for'' means by default that we included both the main effect of the covariate and its interaction with the treatment. We also report results for an estimator adjusted for the covariates and prognostic score, but excluding the interaction term. Regulatory guidance often recommends against the inclusion of interactions without strong a-priori evidence for their existence \cite{ema-covar-adjustment}. And although theoretically useful, the practical benefits from including the interactions may be negligible in practice. Indeed, it follows from theorem \ref{thm:comparison} that the benefit disappears when there is no heterogeneity of effect.
}. We also repeated the prognostic-score-and-covariate-adjusted analysis using the true ``oracle'' prognostic score $\E{Y_0|X=x}$ to show a best case scenario in terms of the prognostic model (this estimator would not be feasible in practice). Results from additional regression specifications are available in the appendix.

The result was a set of five effect estimates. We calculated the squared-error of each estimate relative to the true treatment effect, $\E{Y_1-Y_0}$, known from the data-generating counterfactual distribution, repeated this process 10,000 times, and averaged the squared-errors to obtain mean-squared errors for each estimator. The results are shown in Table \ref{tbl:sim_results}.

\begin{table}[p]
\centering
\begin{tabular}{|c||c|c|c|c|c|}
\hline
Scenario & 
unadjusted & 
\makecell{+covariates\\ +interactions} &
\makecell{+covariates\\ +prognostic score \\ +interactions} &
\makecell{+covariates\\ +prognostic score} & 
oracle \\ 
\hline
Baseline &
0.076 &
0.051 &
\textbf{0.017} &
\textbf{0.017} &
0.008 
\\
Strong Effect &
0.077 &
0.051 &
\textbf{0.018} &
\textbf{0.018} &
0.008 
\\
Linear &
0.035 &
\textbf{0.008} &
\textbf{0.008} &
\textbf{0.008} &
0.008 
\\
Heterogeneous Effect&
0.055 &
0.030 &
\textbf{0.021} &
0.022 &
0.020
\\
Surrogate Outcome&
0.075 & 
0.050 &
0.038 &
\textbf{0.037} &
0.008 
\\
Covariate Shift &
0.077 &
0.050 &
\textbf{0.049} &
\textbf{0.049} &
0.008 
\\
\hline
\end{tabular}
\caption{Mean-squared errors (MSEs) of each estimator in each simulation scenario. The result with the smallest MSE (excluding the oracle estimator) is shown in bold. The headings correspond to: 
$
Z^\top = [1, \tilde W] 
$ (unadjusted),
$
Z^\top = [1, \tilde W, \tilde X^\top, \tilde W\tilde X^\top] 
$
(+covariates +interactions), 
$
Z^\top = [1, \tilde W, \tilde X^\top, \tilde M^\top, \tilde W\tilde X^\top, \tilde W\tilde M^\top] 
$
(+covariates +prognostic score +interactions), 
$
Z^\top = [1, \tilde W, \tilde X^\top, \tilde M^\top] 
$
(+covariates +prognostic score), 
$
Z^\top = [1, \tilde W, \tilde W\tilde \mu_0(X)^\top] 
$
(oracle)
}
\label{tbl:sim_results}
\end{table}

The simulation results support the theory in that the mean-squared error of the estimator with prognostic covariate adjustment was always less than or equal to the mean-squared error without it. The only two cases in which prognostic covariate adjustment did not substantially decrease the mean-squared error were the linear and distributional shift scenarios. In the former, the linear prognostic relationship is already captured by the other baseline covariates so there is no additional benefit to adding the prognostic score, as expected. In the latter, the prognostic model may not generalize well to the study population, thereby losing a significant portion of its predictive power. All other scenarios, however, demonstrate that adding the prognostic score significantly reduced the mean-squared error. Finally, most or all of the benefit was realized without including the interaction terms, except for a modest gain in the scenario with treatment effect heterogeneity.

\section{Case Study}

In addition to the simulations presented above, we re-analyzed data from an existing trial to demonstrate how prognostic covariate adjustment decreases variance relative to a standard covariate-adjusted analysis. Our results show that prognostic covariate adjustment decreases the estimated standard errors relative to raw covariate adjustment.

Our demonstration trial, reported by \citet{dha}, was conducted to determine if docosahexaenoic acid (DHA) supplementation slows cognitive and functional decline for individuals with mild to moderate Alzheimer's disease. The trial was performed through the Alzheimer's Disease Cooperative Study (ADCS), a consortium of academic medical centers and private Alzheimer disease clinics funded by the National Institute on Aging to conduct clinical trials on Alzheimer disease. 

Quinn et al. randomized 238 subjects to a treatment arm given DHA and 164 subjects to a control arm given placebo. This trial measured a number of covariates at baseline including demographics and patient characteristics (e.g. sex, age, region, weight), lab tests (e.g. blood pressure, ApoE4 status \cite{apoe4}), and component scores of cognitive tests. A full list of the 37 covariates we used is available in the appendix. Any missing covariate values were mean-imputed in our reanalysis.

The primary outcome of interest for our reanalysis was the increase in the Alzheimer's Disease Assessment Scale - Cognitive Subscale (ADAS-Cog 11, a quantitative measure of cognitive ability) \cite{adas-cog} over the duration of the trial (18 months). Decrease in an Activities of Daily Living (ADL) score \cite{adl-ad} and increase in Clinical Dementia Rating (CDR) \cite{cdr-ad} were also recorded in the study and we treated these as secondary endpoints to demonstrate the benefits of prognostic covariate adjustment for a surrogate outcome. 

Before examining the trial data, we fit a prognostic model for the increase in ADAS-Cog 11 over 18 months conditional on the measured covariates. To train the model we used a large historical training dataset comprised of 6,919 early-stage Alzheimer's patients. These data came from the Alzheimer's Disease Neuroimaging Initiative (ADNI) and the Critical Path for Alzheimer's Disease (CPAD) database \cite{cpad1,cpad2}, and included measurements of ADAS-Cog 11 at 6-month, or more frequent, intervals post-baseline. The ADNI dataset is made up of longitudinal data from 4 sequential large observational studies in Alzheimer's disease, while the CPAD dataset is made up of control arm data from 29 Alzheimer's disease clinical trials. These data also included the same baseline covariates as were measured in the DHA trial (imputed to a column mean where missing). We used a random forest with 1000 trees to learn the prognostic model; all other parameters were left to their defaults in the python sklearn package \cite{sklearn}.

Once fit, we applied our prognostic model to generate a prognostic score for each subject in the trial dataset; that is, we used the trained random forest model to predict the change in ADAS-Cog11 for a particular patient under standard-of-care. We then estimated the treatment effect on each outcome using three different methods: (i) difference-in-means (i.e. unadjusted linear regression), (ii) linear regression adjusted for the baseline covariates, and (iii) linear regression adjusted for the baseline covariates and the ADAS-Cog 11 prognostic score corresponding to the appropriate timepoint for the trial, and (iv) the same estimator as case (iii) but with the interaction terms omitted. Note that in cases (iii) and (iv) the ADL and NPI outcomes were analyzed with our ADAS-Cog 11 prognostic score, not a separate ADL or CDR prognostic score. The purpose of this is to demonstrate how prognostic covariate adjustment works for a surrogate outcome. We report results in terms of an estimated effect and 1.96 times an estimated standard error (i.e. a 95\% confidence interval) in Table \ref{tbl:dha}.

\begin{table}[p]
\centering
\begin{tabular}{ |c||c|c|c|c| } 
\hline
Outcome & unadjusted & \makecell{+covariates\\ +interactions} &  \makecell{+covariates\\ +prognostic score \\ +interactions} &
\makecell{+covariates\\ +prognostic score}\\ 
\hline
ADAS-Cog 11 & 
-0.10 ±2.02 & 
0.58 ±1.72 & 
\textbf{0.56 ±1.69} &
\textbf{0.41 ±1.69} \\
ADL & 
-0.31 ±3.11 &
0.27 ±2.58 &
0.34 ±2.56 &
\textbf{-0.06 ±2.43} \\
CDR & 
-0.01 ±0.65 &
\textbf{0.03 ±0.54} &
\textbf{0.03 ±0.54} &
\textbf{0.04 ±0.54} \\
\hline
\end{tabular}
\caption{Results from the reanalysis of the DHA trial. Results are shown in terms of estimated effect $\pm$ 1.96 $\times$ estimated standard deviation (sandwich estimator). The result with the smallest estimated standard deviation is in bold for each outcome (row). Headings are the same as in table \ref{tbl:sim_results}.}
\label{tbl:dha}
\end{table}

Concordant with our simulation studies, the standard errors for the effect obtained using prognostic covariate adjustment were less than or equal to those obtained using unadjusted for standard covariate-adjusted analyses. This led to narrower confidence intervals (which are still theoretically guaranteed to have the correct frequentist coverage). Using prognostic covariate adjustment decreased the standard error for estimated treatment effects on ADAS-Cog11 and ADL, even though we adjusted for the predicted change in ADAS-Cog11 for both outcomes. There was negligible impact of including the surrogate prognostic score for CDR. Similarly, including the interaction term in the prognostic-adjusted regressions had little, or deleterious, impact in this experiment.

Covariate and prognostic covariate adjustment did modify the point estimates for the treatment effects to some extent, but only minimally relative to the size of the estimated standard errors. Even though the point estimates for individual studies can change, adjusting for baseline covariates or a prognostic score does not add bias \cite{wang, leon, yang-tsiatis}. In this particular trial, none of the outcomes demonstrated statistically significant improvements regardless of the analysis used, consistent with the original analyses of these data \cite{dha}.

\subsection{Power Analysis}

In addition to the reanalysis, we considered how we would power this trial had we been involved prospectively. Since trials must be designed before they are run, we did not use any data collected from the DHA trial to do this. 

Based on our historical training dataset, we estimated the marginal outcome variance to be $\sigma^2_0 = 61.76$ and the cross-validated prognostic model-outcome correlation to be $\rho_0 = 0.44$ (the maximum correlation between any single covariate and the outcome in the training data was 0.39, for baseline ADAS Cog 11 comprehension sub-score). Assuming common variances $\sigma^2_0 = \sigma^2_1$ the anticipated upper bound of the sampling variance of the unadjusted estimator (eq. \ref{eq:var-anova}) came out to 0.64 with 238 treated and 164 control subjects ($n = 402$, 3:2), giving a power of $\ge80\%$ to detect an effect size of 2.25 points in ADAS-Cog11 via eq. \ref{eq:norm-pwr}.

Assuming these same parameters and a common correlation $\rho_0 = \rho_1$, numerical optimization\footnote{
Note that eq. \ref{eq:1to1-n} does not apply here because the desired randomization ratio is not 1:1. Despite this, the obtained sample size reduction is still close to $\rho^2$.
} of eq. \ref{eq:var-procova} composed with eq. \ref{eq:norm-pwr} at the same 3:2 randomization ratio showed that $\ge80\%$ power was attainable with only 131 control and 190 treated subjects ($n=321$) when knowledge of the prognostic model was leveraged in the design. This represents an impressive 20\% reduction in the enrollment that would have been required. 

\section{Discussion}

Our theoretical and empirical results demonstrate that linear adjustment for a prognostic score is an effective and safe method for leveraging historical data to reduce uncertainty in randomized trials. In fact, prognostic covariate adjustment is optimal (i.e., semiparametric efficient) when there is a constant treatment effect and the prognostic model accurately predicts the conditional mean of the trial control arm. These benefits may also be exploited to design smaller trials that maintain their power.

Heuristically, the reason that prognostic covariate adjustment via a linear model improves efficiency is that the prognostic score captures nonlinear relationships between the covariates and outcome that the linear model could not otherwise exploit. This helps to ``explain away'' some amount of previously unexplained variability in the outcome. It is easy to show that if the prognostic score is a linear function of the coefficients (i.e., $m(X) = X^\top\alpha$) then adjusting for it in addition to the covariates cannot improve efficiency (in fact, the linear model becomes indeterminate). The prognostic model must therefore be nonlinear to provide any benefit. Our simulation results bear out this conclusion; there was no gain from prognostic covariate adjustment in the linear scenario. The lack of improvement in the covariate shift scenario may also be due to a similar phenomenon--the smaller the support of the covariate distribution, the better a linear (or constant) approximation will capture the outcome-covariate relationship.

Together, these results can be summarized as follows: adjusting for a prognostic score obtained from a nonlinear predictive model trained on a large database of historical control arm data provides near optimal treatment effect estimates in randomized trials with continuous outcomes (given previously stated conditions). There has recently been tremendous growth in the availability and performance of technology for nonlinear regression modeling (i.e., supervised machine learning), particularly in the area of deep learning. The intersection of this technological development with the creation of large historical control databases provides an opportunity to use prognostic covariate adjustment to substantially improve future clinical trials. 

As an alternative, one might forego the historical data altogether and use a nonlinear analysis directly on the trial data in order to account for any nonlinearities in the relationship between covariates and outcome. Several semiparametric efficient procedures exist for this kind of nonlinear/adaptive covariate adjustment \cite{Chernozhukov:2018fb, Wager:2016dz, rothe-rct}. If there is effect heterogeneity, adaptive adjustment methods may have the upper hand because the prognostic score will be less predictive of the treatment-arm outcomes. One practical advantage of adaptive adjustment is that there is no need for historical data at all, but that also implies that no information from historical data sources is used to improve the estimate (e.g., it may be easier to capture nonlinearities using large historical datasets, which could include hundreds of thousands of samples, than it is in small trial datasets).

On the other hand, linear prognostic covariate adjustment has its own set of practical advantages. Perhaps the most important of these is that the analysis is a standard linear regression once the prognostic score has been calculated. This makes it easy to explain and interpret for trialists, simple to implement with existing software, and makes the approach suitable under current regulatory guidance \cite{ema-covar-adjustment}. The estimator can be further simplified by omitting the interaction terms if little effect heterogeneity is suspected. The procedure is also modular: construction of the prognostic model may be outsourced to a group of machine learning experts, which also makes it possible to separate access to the historical and trial datasets. In fact, the historical data can be used to train a prognostic model within a privacy preserving framework with guaranteed protection of private health information \cite{dankar2012application,brisimi2018federated}. Lastly, our conservative variance bound (theorem \ref{thm:pwr}) makes it easy to prospectively power a trial without estimating or assuming a large number of population parameters.

It is also possible that prognostic covariate adjustment retains a statistical advantage in finite samples relative to direct nonlinear adjustment, though we have left theoretical investigation of this question to future studies. The flexibility of our procedure with respect to the prognostic model also allows for the exploitation of large proxy datasets where a surrogate outcome was measured. {This includes cases where the causal contrast of interest is between two non-baseline treatments (e.g. a trial with three or more treatment levels).} The use of historical data could therefore serve as an effective regularizer for learning in small samples, but more research will be necessary to make that conclusion. It should also be possible to combine the advantages of multiple procedures, i.e., to perform adaptive adjustment for a fixed prognostic model trained on historical data. 

Further, we show that efficiency is usually improved to some extent even if the prognostic model is imperfect or the treatment effect is not constant. In fact, prognostic covariate adjustment can never hurt asymptotic efficiency. For some studies, including a prognostic score as an adjustment covariate could mean the difference between a null result and a clear demonstration of efficacy or harm. Moreover, the unbiasedness, type-I error rate control, and correct confidence interval coverage of prognostic covariate adjustment are inherent properties of the procedure for any choice of prognostic model that only uses information from a subject's baseline covariates. 

For our efficiency improvement to hold in practice, all that is necessary is for the trial to be large enough for the asymptotic variance to be a reasonable estimate and for mild technical regularity assumptions (which underpin most asymptotic theory) to hold.\footnote{
These are technical conditions that might allow for the exchange of integration and differentiation or guarantee the existence of a mean value. They have no practical importance or violation in almost all real-world scenarios.
} These assumptions are not unique to prognostic covariate adjustment and are required for almost any meaningful frequentist analysis of the data.

It should also be possible to exploit prognostic covariate adjustment as a component in other kinds of estimators (repeated measures, binary outcomes, survival models etc.). We have limited our theoretical discussion here to the linear model since it is so common, but a prognostic score may be used as a covariate in any analysis that allows for covariate adjustment. It remains to be seen what optimality properties are satisfied by doing prognostic covariate adjustment in each kind of analysis and under what conditions.

prognostic covariate adjustment also presents a method for optimally reducing the dimensionality of the adjustment covariate set even when performing standard linear analyses. In our empirical demonstration we included 37 covariates (and their interactions with treatment) in the linear regression model in order to present a fair comparison with prognostic covariate adjustment, but including this many adjustment covariates is rarely if ever done in practice. The prognostic score offers an opportunity to exploit the information present in all of those covariates without necessarily having to include each of them in the analysis. However, including the raw covariates by themselves helps ensure some reduction in variance even if the prognostic score has a very poor correlation with the outcome. Similarly, including treatment-covariate interaction terms can only be beneficial (in large samples) but there may be a practical limit to the number of terms an analyst is willing to include in the analysis.

Regardless of theoretical considerations, our work shows that prognostic covariate adjustment offers practical advantages even when the assumptions that guarantee some forms of optimality are violated. This is borne out by the analyses of the secondary ADL and and CDR outcomes using the ADAS-Cog 11 prognostic score in our empirical demonstration. Since there is no (asymptotic) harm in including a prognostic score, it may behoove trialists working with large-enough samples to amass and exploit a variety of prognostic scores for each analysis.

We used random forests as the prognostic models in our demonstration, but alternative methods may be preferred in practice. Missing covariates, multiple longitudinal outcomes, and high-dimensional covariates (e.g. a whole genome) may be present in real trial data. Deep learning methods (and in particular generative deep learning methods) are often well suited to handle these challenges \cite{walsh-ad-model, rajkomar, lecunn-dl, miotto-dl-healthcare}. Deep learning methods can also exploit transfer learning to improve performance when the relevant historical data are meager \cite{dubois-transfer}. Moreover, most trials actually have many outcomes of interest (e.g., multiple primary or secondary endpoints, adverse events, and biomarkers monitored for safety). Often, each of these outcomes is measured at multiple timepoints during a trial. In principle, the development of comprehensive, longitudinal predictive models of patient outcomes under standard-of-care would enable prognostic covariate adjustment to be used for each of these analyses, thereby enabling the design of studies that require fewer subjects to achieve desired operating characteristics.  

In comparison to other kinds of historical borrowing methods, prognostic covariate adjustment theoretically guarantees strict type-I error rate control and confidence interval coverage in general settings. In anything but the smallest of trials, there is no need for elaborate simulations to demonstrate the trial operating characteristics (as are usually required for methods that cannot theoretically guarantee control of type I error). Moreover, we provide a simple formula in theorem \ref{thm:pwr} that can be used to calculate power prospectively while accounting for the beneficial effect of prognostic covariate adjustment.

\subsection*{Data Availability}

{\small 
The data used in this study are available from the following sources, subject to their discretion.

Certain data used in the preparation of this article were obtained from the Alzheimer’s Disease Neuroimaging Initiative (ADNI) database (\href{url}{adni.loni.usc.edu}). The ADNI was launched in 2003 as a public-private partnership, led by Principal Investigator Michael W. Weiner, MD. The primary goal of ADNI has been to test whether serial magnetic resonance imaging (MRI), positron emission tomography (PET), other biological markers, and clinical and neuropsychological assessment can be combined to measure the progression of mild cognitive impairment (MCI) and early Alzheimer’s disease (AD). For up-to-date information, see \href{url}{www.adni-info.org}.

Certain data used in the preparation of this article were obtained from the Critical Path for Alzheimer's Disease (CPAD) database. In 2008, Critical Path Institute, in collaboration with the Engelberg Center for Health Care Reform at the Brookings Institution, formed the Coalition Against Major Diseases (CAMD), which was then renamed to CPAD in 2018. The Coalition brings together patient groups, biopharmaceutical companies, and scientists from academia, the U.S. Food and Drug Administration (FDA), the European Medicines Agency (EMA), the National Institute of Neurological Disorders and Stroke (NINDS), and the National Institute on Aging (NIA). CPAD currently includes over 200 scientists, drug development and regulatory agency professionals, from member and non-member organizations. The data available in the CPAD database has been volunteered by CPAD member companies and non-member organizations.

Certain data used in the preparation of this article were obtained from the University of California, San Diego Alzheimer’s Disease Cooperative Study Legacy database.
}

\subsection*{Acknowledgments}

{\small
We are grateful to Xinkun Nie and Oleg Sofrygin for enlightening conversations and to Rachael C. Aikens for feedback on a draft of this article.

Data collection and sharing for this project was funded in part by the Alzheimer's Disease Neuroimaging Initiative (ADNI) (National Institutes of Health Grant U01 AG024904) and DOD ADNI (Department of Defense award number W81XWH-12-2-0012). ADNI is funded by the National Institute on Aging, the National Institute of Biomedical Imaging and Bioengineering, and through generous contributions from the following: AbbVie, Alzheimer’s Association; Alzheimer’s Drug Discovery Foundation; Araclon Biotech; BioClinica, Inc.; Biogen; Bristol-Myers Squibb Company; CereSpir, Inc.; Cogstate; Eisai Inc.; Elan Pharmaceuticals, Inc.; Eli Lilly and Company; EuroImmun; F. Hoffmann-La Roche Ltd and its affiliated company Genentech, Inc.; Fujirebio; GE Healthcare; IXICO Ltd.; Janssen Alzheimer Immunotherapy Research \& Development, LLC.; Johnson \& Johnson Pharmaceutical Research \& Development LLC.; Lumosity; Lundbeck; Merck \& Co., Inc.; Meso Scale Diagnostics, LLC.; NeuroRx Research; Neurotrack Technologies; Novartis Pharmaceuticals Corporation; Pfizer Inc.; Piramal Imaging; Servier; Takeda Pharmaceutical Company; and Transition Therapeutics. The Canadian Institutes of Health Research is providing funds to support ADNI clinical sites in Canada. Private sector contributions are facilitated by the Foundation for the National Institutes of Health (\href{url}{www.fnih.org}). The grantee organization is the Northern California Institute for Research and Education, and the study is coordinated by the Alzheimer’s Therapeutic Research Institute at the University of Southern California. ADNI data are disseminated by the Laboratory for Neuro Imaging at the University of Southern California.

Data collection and sharing for this project was funded in part by the University of California, San Diego Alzheimer’s Disease Cooperative Study (ADCS) (National Institute on Aging Grant Number U19AG010483).
}

\bibliography{references}

\appendix

\section{Mathematical Results}

{

Throughout we assume enough regularity conditions for the asymptotic normality of M-estimators to hold. The details are found in chapter 5 (thm 5.23) of \citet{Van_der_Vaart2000-yx}.
}

\begin{lemma}[Rosenblum]
\label{thm:rosenblum}

The influence function for the linear regression treatment effect estimator we describe in section \ref{sec:approach} is $\psi = \psi_1 - \psi_0$ where

\begin{equation}
    \psi_w =
        \frac{W_w}{\pi_w} (Y-\hat\mu^*_w(X)) + 
        (\hat\mu^*_w(X) - \hat\mu^*_w)
\label{eq:influence}
\end{equation}

and $\hat\mu_w^*(X) = Z_w^\top \beta^*$ and $\hat\mu^*_w = \E{\hat\mu_w^*(X)}$. The parameters $\hat\beta^*$ are those that maximize the (model-based) likelihood in expectation (under the true law of the data). In other words, $\hat\mu_w^*(X)$ characterizes the linear model that comes as close as possible to the true conditional mean function $\mu_w(X) = \E{Y_w|X}$ and $\hat\mu^*_w$ is its mean value (averaged over $X$).
\end{lemma}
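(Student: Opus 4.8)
The plan is to exhibit $\hat\tau=\hat\beta_W$ as an M-estimator (Z-estimator) and to read its influence function off the standard sandwich expansion, after first rewriting it in a form where the empirical centering of the interaction regressors plays no role. Throughout, note that the prognostic model $m$ is fixed before the trial, so $M_i=m(X_i)$ is a deterministic function of $X_i$ and $(X_i,M_i,W_i,Y_i)$ are i.i.d.; no correction for ``estimating'' $m$ is needed.

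First I would prove the finite-sample identity $\hat\tau=\hat\beta_W=\Ehat{\hat\mu_1(X)}-\Ehat{\hat\mu_0(X)}$, where $\hat\mu_w(\cdot)$ denotes the ordinary least squares fit of $Y$ on $Z=(1,X^\top,M^\top)^\top$ computed using only the subjects with $W_i=w$. Two elementary facts give this. (a) The fully treatment-interacted least-squares problem decouples into the two within-arm least-squares problems --- the intercept and slope coefficients describing the control arm, and those describing the treatment arm, vary freely and independently --- so the pooled fitted value at treatment level $w$ equals the within-arm-$w$ fitted value $\hat\mu_w(X)$. (b) In the empirically-centered parametrization, $\hat\mu_1(X)-\hat\mu_0(X)=\hat\beta_W+(X-\bar X)^\top\hat\beta_{WX}+(M-\bar M)^\top\hat\beta_{WM}$, and averaging this over the trial gives exactly $\hat\beta_W$ because $\overline{X-\bar X}=0$ and $\overline{M-\bar M}=0$. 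This removes the dependence of the regressors on the empirical distribution.

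Next I would stack estimating equations for $\theta=(\beta_0,\beta_1,\mu_0,\mu_1)$: the within-arm normal equations $\Ehat{\tfrac{W_w}{\pi_w}Z(Y-Z^\top\beta_w)}=0$ for $w\in\{0,1\}$ (with $Z$ not involving $W$), whose population solution $\beta_w^*$ is the within-arm linear projection, $\hat\mu_w^*(X)=Z^\top\beta_w^*$ (which, by the same decoupling, coincides with the lemma's $Z_w^\top\beta^*$); and $\Ehat{Z^\top\beta_w-\mu_w}=0$, whose solution is $\hat\mu_w^*=\E{\hat\mu_w^*(X)}$. The empirical solution is $(\hat\beta_0,\hat\beta_1,\Ehat{\hat\mu_0(X)},\Ehat{\hat\mu_1(X)})$, so $\hat\tau=\hat\mu_1-\hat\mu_0$ in this notation. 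By the standard Z-estimator expansion the influence function of $\hat\theta$ is $-(\E{\dot g})^{-1}g(\theta^*;O)$, where $O$ is the generic observation. The Jacobian $\E{\dot g}$ is block lower triangular: the $\beta_w$ diagonal blocks are $-\E{ZZ^\top}$ (using $Z\perp W$), the $\mu_w$ diagonal entries are $-1$, and the off-diagonal $(\mu_w,\beta_w)$ blocks are $\E{Z}^\top$. Inverting block-wise gives, for the $\mu_w$-coordinate,
\begin{align}
\E{Z}^\top\big(\E{ZZ^\top}\big)^{-1}\frac{W_w}{\pi_w}Z\big(Y-\hat\mu_w^*(X)\big)\;+\;\big(\hat\mu_w^*(X)-\hat\mu_w^*\big).
\end{align}

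The crux is the identity $\E{Z}^\top\big(\E{ZZ^\top}\big)^{-1}Z=1$: because the first coordinate of $Z$ is the constant $1$, the first column of the Gram matrix $\E{ZZ^\top}$ is $\E{Z\cdot 1}=\E{Z}$, so $\big(\E{ZZ^\top}\big)^{-1}\E{Z}$ is the first standard basis vector and $\E{Z}^\top\big(\E{ZZ^\top}\big)^{-1}Z=1$. This collapses the displayed expression to $\tfrac{W_w}{\pi_w}(Y-\hat\mu_w^*(X))+(\hat\mu_w^*(X)-\hat\mu_w^*)=\psi_w$, whence the influence function of $\hat\tau=\hat\mu_1-\hat\mu_0$ is $\psi_1-\psi_0=\psi$. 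The remaining work is to verify the regularity conditions for the Z-estimator expansion --- finite second moments, invertibility of $\E{ZZ^\top}$ within each arm (no perfect collinearity among $1$, $X$, $M$), and the usual differentiability/stochastic-equicontinuity conditions --- which I would dispatch by appeal to the standard results on M-estimation and on regression-adjusted trial estimators cited in Section~\ref{sec:approach}. Conceptually, the main obstacle is getting Step~1 exactly right so that the empirical centering of the interaction regressors does not leak into the influence function; after that the derivation is just the bookkeeping of a block-triangular matrix inverse together with the one-line Gram-matrix identity.
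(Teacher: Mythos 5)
Your derivation is correct, but it is genuinely different from what the paper does: the paper offers no self-contained proof of this lemma at all, instead deferring to \citet{Robins:1994gq} and the GLM treatment in \citet{rosenblum-glm}, whereas you re-derive the result from scratch. Your route — (i) the exact finite-sample standardization identity $\hat\beta_W=\Ehat{\hat\mu_1(X)}-\Ehat{\hat\mu_0(X)}$, which exploits the decoupling of the fully interacted fit into within-arm regressions and cancels the empirical centering exactly, (ii) a stacked Z-estimator expansion with a block lower-triangular Jacobian, and (iii) the Gram-matrix identity $\E{Z}^\top\E{ZZ^\top}^{-1}Z=1$ coming from the intercept column, which collapses the $\beta$-contribution into the inverse-probability-weighted residual term — is sound; the only places requiring care are exactly the ones you flag (arm-wise full rank of the design, $W\perp(X,M)$ from randomization to identify the Jacobian blocks, and standard M-estimation regularity), and your observation that the fitted function is invariant to centering correctly reconciles your within-arm projections with the lemma's $Z_w^\top\beta^*$. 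What your approach buys is transparency: it shows precisely where randomization and the intercept enter, shows the influence function is that of the standardized (g-computation) functional, and handles the empirically centered interaction regressors explicitly rather than implicitly. What the paper's citation route buys is generality and brevity: the results of Robins and Rosenblum--van der Laan cover arbitrary canonical-link GLMs and dispense with the asymptotic bookkeeping once and for all, so the lemma follows as the identity-link special case without repeating the expansion.
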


This follows from results in \citet{Robins:1994gq}. An accessible presentation for the case of generalized linear models is given in \citet{rosenblum-glm}.

\begin{definition}[Difference-in-means]
\label{def:diff}
The ``difference-in-means'' (or ``unadjusted'') estimator of $\tau = \mu_1-\mu_0$ is $\hat\tau_\Delta = \Ehat{Y|W_1} - \Ehat{Y|W_0}$.
\end{definition}

Note that throughout the appendix we omit the subscript $n$ on estimators. E.g. $\tau_\Delta$ is shorthand for $\tau_{\Delta, n}$ and our asymptotic statements refer to the sequence of estimators as $n$ becomes large.

\begin{lemma}
The difference-in-means estimator has asymptotic variance given by

\begin{align}
    n\V{\hat\tau_\Delta}
    &\pto \label{eq:asymtptotic-var-delta}
            \frac{\sigma_0^2}{\pi_0} + \frac{\sigma_1^2}{\pi_1}
\end{align}

where $\sigma_w = \V{Y_w}$.
\end{lemma}

\begin{proof}
This fact is well-known. One proof follows the outline of \ref{thm:ancovai-var} below taking $Z^\top = [1, W]$.
\end{proof}

\begin{definition}[ANCOVA I]
The ``ANCOVA I'' estimator of $\tau = \mu_1-\mu_0$ (denoted $\hat\tau_\numeral{1}$) is the effect estimated using a linear regression with predictors $Z^\top = [1, W, X^\top]$ and outcome $Y$.
\end{definition}

\begin{definition}[ANCOVA II]
The ``ANCOVA II'' estimator of $\tau = \mu_1-\mu_0$ (denoted $\hat\tau_\numeral{2}$) is the effect estimated using a linear regression with predictors $Z^\top = [1, \tilde W, \tilde X^\top, \tilde W \tilde X^T ]$ and outcome $\tilde Y$.
\end{definition}

The following two theorems (\ref{thm:ancovai-var} and \ref{thm:ancovaii-var}) are mild generalizations of or follow closely from results stated in \citet{leon} and \citet{yang-tsiatis}. Details are provided here for the reader's convenience.

\begin{theorem}
\label{thm:ancovai-var}

The ANCOVA I estimator is asymptotically unbiased for $\tau = \mu_1-\mu_0$ and has asymptotic variance given by

\begin{align}
    n\V{\hat\tau_\numeral{1}}
    &\pto \label{eq:asymtptotic-var-ancovaI}
            \frac{\sigma_0^2}{\pi_0} + \frac{\sigma_1^2}{\pi_1}
          + \left(\frac{1}{\pi_0 \pi_1}\right) \xi^\top V \xi 
          -2 \left(\frac{1}{\pi_0 \pi_1}\right) \xi_*^\top V \xi
\end{align}

where $\xi = \pi_0 \C{Y_0,X} + \pi_1 \C{Y_1,X}$, $\xi_* = \pi_0 \C{Y_1,X} + \pi_1 \C{Y_0,X}$, and $V = \V{X}^{-1}$

\end{theorem}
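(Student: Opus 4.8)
The plan is to obtain the influence function of $\hat\tau_\numeral{1}$, read off its mean to get asymptotic unbiasedness, and compute its variance, which is the limiting value of $n\V{\hat\tau_\numeral{1}}$. I would start from the fact that $\hat\tau_\numeral{1}$ is the $W$-coordinate of the OLS M-estimator with regressors $Z = (1, W, X^\top)^\top$, so its influence function takes the augmented-inverse-probability-weighting form of Lemma \ref{thm:rosenblum}, namely $\psi = \psi_1 - \psi_0$ with $\psi_w = \tfrac{W_w}{\pi_w}\big(Y - \hat\mu_w^*(X)\big) + \big(\hat\mu_w^*(X) - \hat\mu_w^*\big)$, except that here $\hat\mu_w^*(X) = [1,w,X^\top]\beta^*$ is the \emph{common-slope} (no-interaction) population least-squares fit. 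The substantive step is to identify $\beta^*$ using the randomization structure \eqref{rct-structure}. Because $W \perp X$, the population projection of $W$ onto $\mathrm{span}\{1, X\}$ is the constant $\pi_1$, so a Frisch--Waugh--Lovell argument shows the $W$-coefficient of $\beta^*$ equals $\C{W,Y}/\V{W}$; expanding $\C{W,\, WY_1 + (1-W)Y_0}$ with $W \perp (Y_0,Y_1)$, $W^2=W$, $W(1-W)=0$ gives $\C{W,Y} = \pi_0\pi_1\tau$ and $\V{W} = \pi_0\pi_1$, hence this coefficient is exactly $\tau$ --- which is the asymptotic-unbiasedness claim. The same independence gives $\C{X,Y}=\pi_0\C{Y_0,X}+\pi_1\C{Y_1,X}=\xi$, so the $X$-slope of $\beta^*$ is $\gamma = V\xi$ and the intercept works out to $\mu_0$; thus $\hat\mu_w^*(X) = \mu_w + \tilde X^\top\gamma$ and $\hat\mu_w^* = \mu_w$.

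Next I would plug these into the influence function. Since $\hat\mu_w^*(X) - \hat\mu_w^* = \tilde X^\top\gamma$ does not depend on $w$, those pieces cancel in $\psi_1 - \psi_0$, leaving $\psi = \tfrac{W_1}{\pi_1}A_1 - \tfrac{W_0}{\pi_0}A_0$ with $A_w = Y_w - \mu_w - \tilde X^\top\gamma$; each $A_w$ has mean zero and is independent of $W_w$, so $\E{\psi} = 0$ (confirming the limiting point is $\tau$). Then $\V{\psi} = \E{\psi^2}$: the cross term vanishes identically because $W_1 W_0 = 0$, and each squared term reduces via $W_w^2 = W_w$ and $W_w \perp (Y_w, X)$ to $\tfrac{1}{\pi_w}\E{A_w^2}$. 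Expanding $\E{A_w^2} = \sigma_w^2 - 2\gamma^\top\C{Y_w,X} + \gamma^\top\Sigma_x\gamma$ with $\Sigma_x = \V{X}$ and substituting $\gamma = V\xi$, $V = \Sigma_x^{-1}$ symmetric, turns this into $\sigma_w^2 - 2\xi^\top V\C{Y_w,X} + \xi^\top V\xi$. Summing over $w$ with weights $1/\pi_w$, the identity $\tfrac{1}{\pi_0}+\tfrac{1}{\pi_1}=\tfrac{1}{\pi_0\pi_1}$ produces the $\big(\tfrac{1}{\pi_0\pi_1}\big)\xi^\top V\xi$ term, while $\tfrac{\C{Y_0,X}}{\pi_0}+\tfrac{\C{Y_1,X}}{\pi_1} = \tfrac{\xi_*}{\pi_0\pi_1}$ with $\xi_* = \pi_0\C{Y_1,X}+\pi_1\C{Y_0,X}$ produces the $-2\big(\tfrac{1}{\pi_0\pi_1}\big)\xi_*^\top V\xi$ term; together with $\tfrac{\sigma_0^2}{\pi_0}+\tfrac{\sigma_1^2}{\pi_1}$ this is exactly the claimed formula. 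Asymptotic normality of $\sqrt n(\hat\tau_\numeral{1}-\tau)$ with this variance then follows from the standard M-estimator central limit theorem under finite second moments of $(Y_0,Y_1,X)$ and invertibility of $\E{ZZ^\top}$.

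I expect the main obstacle to be organizational rather than deep: carefully checking that under $W\perp X$ the population Gram matrix $\E{ZZ^\top}$ block-decomposes so that the $W$-coordinate of $(\E{ZZ^\top})^{-1}Z(Y-Z^\top\beta^*)$ collapses to $\tilde W/(\pi_0\pi_1)$ times a residual --- a sign or centering slip here would spuriously couple the $W$- and $X$-coefficients --- and keeping the centering conventions for $X$ (and for $W$ inside $\tilde W$) consistent between the intercept-versus-slope decomposition of $\beta^*$ and the quadratic-form bookkeeping in $\V{\psi}$. Everything else --- the cancellation of the $\tilde X^\top\gamma$ constants, the vanishing cross term, and the $\pi$-algebra --- is routine. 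The cleanest write-up routes the whole derivation through Lemma \ref{thm:rosenblum}, noting only that its derivation applies verbatim to the constrained ANCOVA I regression with $\hat\mu_w^*(X)$ taken to be the common-slope fit.
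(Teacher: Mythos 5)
Your proposal is correct and takes essentially the same route as the paper: both identify the population least-squares coefficients (you via a Frisch--Waugh--Lovell argument, the paper by directly inverting $\E{ZZ^\top}$), substitute them into the influence function of Lemma \ref{thm:rosenblum}, and compute its variance. Your per-arm grouping with the vanishing $W_1W_0$ cross term is just a reorganization of the paper's decomposition into the difference-in-means influence function plus the augmentation term, and yields the identical formula.
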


\begin{proof}

We begin by applying lemma \ref{thm:rosenblum}. Minimization of the expected log-likelihood shows that $\hat\beta^* = \E{ZZ^\top}^{-1}\E{ZY}$.  Some algebra
\footnote{
The identity $\E{AB} = \C{A,B} + \E{A}\E{B}$ and the fact that $W \perp Y_w, X$ and $W_wY = W_wY_w$ by our structural assumption (eq. \ref{rct-structure}) may be used to show that

\begin{equation*}
\E{ZZ^\top}^{-1}
=
\left[
\begin{array}{ccc}
    \frac{1}{\pi_0}+\eta^\top V\eta & 
    -\frac{1}{\pi_0} &
    -V\eta
    \\
    -\frac{1}{\pi_0} &
    \frac{1}{\pi_0\pi_1} &
    0
    \\
    -V\eta & 
    0 & 
    V \\
\end{array}
\right]
\quad \quad
\E{Z Y}
=
\left[
\begin{array}{c}
     \mu \\
     \pi_1\mu_1 \\
     \mu\eta + \xi \\
\end{array}
\right]
\end{equation*}

where $\eta = \E{X}$, $\mu = \E{Y}$, $\xi = \C{X,Y}$, and $V = \V{X}^{-1}$. The inverse is easiest to verify by computing and multiplying by $\E{ZZ^T}$.
}
demonstrates

\begin{equation}
\hat\beta^* = [\mu_0, \tau, (V\xi)^\top]^\top
\end{equation}

where $V = \V{X}^{-1}$, $\xi = \C{X,Y}$, and $\tau = \mu_1 - \mu_0$. Thus $\hat\mu_w^*(X) = \mu_0 + w\tau + \tilde X^\top V \xi = \mu_w + \tilde X^\top V \xi$. In this equation and from here on, let $\tilde X = X - \E{X}$. So clearly $\hat\mu_w^* = \mu_w$.  Then, from eq. \ref{eq:influence}, 

\begin{equation}
    \psi_w =
        \frac{W_w}{\pi_w} (Y-\mu_w)
      - \frac{\tilde W_w}{\pi_w} \underbrace{(\tilde X^\top V \xi)}_{-h_w(X)}
\end{equation}

Where $\tilde W_w = W_w - \pi_w$. An application of  \ref{thm:rosenblum} and some algebra gives

\begin{align}
    \psi_{\numeral{1}}
    & = 
     \underbrace{
        \underbrace{\frac{W_1}{\pi_1} (Y-\mu_1)}_{\psi_{1,\Delta}}
      - \underbrace{\frac{W_0}{\pi_0} (Y-\mu_0)}_{\psi_{0,\Delta}}
      }_{\psi_\Delta}
      - \underbrace{(W_1-\pi_1) 
            \underbrace{\frac{(\tilde X^\top V \xi)}{\pi_0 \pi_1}
            }_{-h(X)}
        }_{\phi}
\label{eq:inflence-lm}
\end{align}

It is known that all regular and asymptotically linear estimators of the treatment effect have an influence function of this form with $h(X)$ dependent on the choice of estimator \cite{leon, Tsiatis:2007vl}. 

By the theory of influence functions, our estimator has a limiting distribution \cite{Tsiatis:2007vl}

\begin{equation}
\sqrt{n}(\hat\tau_{\numeral{1}} - \tau) \xrightarrow{d} \mathcal N(0,\E{\psi_{\numeral{1}}^2})
\label{eq:if-clt}
\end{equation}

The asymptotic variance of $\hat\tau_{\numeral{1}}$ is thus $\E{\psi_{\numeral{1}}^2} = \E{(\psi_\Delta - \phi)^2} = \E{\psi_\Delta^2} - 2\E{\psi_\Delta \phi} + \E{\phi^2}$. The first term is the variance of the influence function for the difference-in-means (also called ``unadjusted'') estimator. It may be verified that this evaluates to $\E{\psi_\Delta^2} = \frac{\sigma_0^2}{\pi_0} + \frac{\sigma_1^2}{\pi_1}$ where $\sigma_w^2 = \V{Y_w}$. The variance of $\phi$ is

\begin{align}
    \E{\phi^2} 
    & = \E{\left(\frac{W_1-\pi_1}{\pi_0 \pi_1} \tilde X^\top V \xi\right)^2}
    \\
    & = 
        \frac{\E{(W_1-\pi_1)^2}}{\pi_0^2 \pi_1^2} 
        \xi^\top V \E{\tilde X \tilde X^\top} V \xi
    \\
    & = 
        \left(\frac{1}{\pi_0 \pi_1}\right) 
        \xi^\top V \xi
\end{align}

The covariance of the two terms involves the expectations $\E{(Y_w-\mu_w)\tilde X} = \C{Y_w, X} = \xi_w$ (note that $\xi = \pi_0\xi_0 + \pi_1 \xi_1$):

\begin{align}
    \E{\psi_\Delta \phi} 
    & = \E{\psi_{1,\Delta} \phi} - \E{\psi_{0,\Delta} \phi} \\
    & = 
        \frac{1}{\pi_1} \xi_1^\top V \xi
      - \frac{-1}{\pi_0} \xi_0^\top V \xi 
    \\
    & = 
        \left(\frac{1}{\pi_0 \pi_1}\right) 
        \xi_*^\top V \xi 
\end{align}

where we have introduced $\xi_* = \pi_1\xi_0 + \pi_0 \xi_1$. Assembling obtains the desired result.

\end{proof}

\begin{corollary}
When $X \in R$ (a single covariate), a consistent estimate of the sampling variance $\V{\hat\tau_\numeral{1}}$ is

\begin{align}
    \hat\nu_\numeral{1}^2
    &= 
        \frac{\hat\sigma_0^2}{n_0} + \frac{\hat\sigma_1^2}{n_1}
      + \frac{n_0 n_1}{n}
        \left(
            \frac{\hat\rho_0\hat\sigma_0}{n_1}
          + \frac{\hat\rho_1\hat\sigma_1}{n_0}
        \right)^2
      -2 \frac{n_0 n_1}{n}
        \left(
            \frac{\hat\rho_0\hat\sigma_0}{n_1}
          + \frac{\hat\rho_1\hat\sigma_1}{n_0}
        \right)
        \left(
            \frac{\hat\rho_0\hat\sigma_0}{n_0}
          + \frac{\hat\rho_1\hat\sigma_1}{n_1}
        \right)
\end{align}

where $\rho_w = \C{Y_w,X}/\sqrt{\V{X} \V{Y_w}}$ and the ``hat'' quantities are any consistent estimates of their respective population parameters.

\end{corollary}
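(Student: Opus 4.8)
The plan is to specialize Theorem~\ref{thm:ancovai-var} to the scalar case and then recognize $\hat\nu_\numeral{1}^2$ as the natural plug-in estimator of the resulting asymptotic variance. When $X \in \mathbb{R}$, the matrix $V = \V{X}^{-1}$ is the reciprocal of a scalar, and $\xi = \pi_0\C{Y_0,X} + \pi_1\C{Y_1,X}$ and $\xi_* = \pi_0\C{Y_1,X} + \pi_1\C{Y_0,X}$ are scalars. First I would write each covariance through the correlation, $\C{Y_w,X} = \rho_w\sigma_w\sqrt{\V{X}}$ with $\sigma_w = \sqrt{\V{Y_w}}$, so that the two quadratic forms in Theorem~\ref{thm:ancovai-var} become $\xi^\top V\xi = (\pi_0\rho_0\sigma_0 + \pi_1\rho_1\sigma_1)^2$ and $\xi_*^\top V\xi = (\pi_1\rho_0\sigma_0 + \pi_0\rho_1\sigma_1)(\pi_0\rho_0\sigma_0 + \pi_1\rho_1\sigma_1)$. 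The factor $\V{X}$ cancels in both, so the asymptotic variance depends on the covariate only through the correlations $\rho_0,\rho_1$.

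Next I would regroup the expression of Theorem~\ref{thm:ancovai-var} by factoring $\pi_0\rho_0\sigma_0 + \pi_1\rho_1\sigma_1 = \pi_0\pi_1\bigl(\tfrac{\rho_0\sigma_0}{\pi_1} + \tfrac{\rho_1\sigma_1}{\pi_0}\bigr)$ and $\pi_1\rho_0\sigma_0 + \pi_0\rho_1\sigma_1 = \pi_0\pi_1\bigl(\tfrac{\rho_0\sigma_0}{\pi_0} + \tfrac{\rho_1\sigma_1}{\pi_1}\bigr)$ and cancelling one copy of the overall $1/(\pi_0\pi_1)$, which gives
\[
n\V{\hat\tau_\numeral{1}} = \frac{\sigma_0^2}{\pi_0} + \frac{\sigma_1^2}{\pi_1} + \pi_0\pi_1\Bigl(\frac{\rho_0\sigma_0}{\pi_1} + \frac{\rho_1\sigma_1}{\pi_0}\Bigr)^{2} - 2\pi_0\pi_1\Bigl(\frac{\rho_0\sigma_0}{\pi_1} + \frac{\rho_1\sigma_1}{\pi_0}\Bigr)\Bigl(\frac{\rho_0\sigma_0}{\pi_0} + \frac{\rho_1\sigma_1}{\pi_1}\Bigr).
\]
This step is purely algebraic bookkeeping and presents no real difficulty.

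Finally, since $\V{\hat\tau_\numeral{1}}$ equals $1/n$ times the displayed bracket, substituting $\pi_w \mapsto n_w/n$ and tracking the resulting powers of $n$ term by term converts each summand into the corresponding summand of $\hat\nu_\numeral{1}^2$ (for instance $\tfrac1n\cdot\sigma_0^2/\pi_0$ becomes $\hat\sigma_0^2/n_0$, and $\tfrac1n\cdot\pi_0\pi_1$ times the squared factor becomes $\tfrac{n_0n_1}{n}$ times the squared factor). Hence $\hat\nu_\numeral{1}^2$ is exactly the plug-in version of $\V{\hat\tau_\numeral{1}}$ with $\hat\sigma_w,\hat\rho_w$ in place of $\sigma_w,\rho_w$, and consistency follows from the continuous mapping theorem together with Slutsky's lemma: $n_w/n \to \pi_w \in (0,1)$ under a fixed randomization ratio, the $\hat\sigma_w$ and $\hat\rho_w$ are consistent by hypothesis, and the bracketed quantity is a continuous function of $(\pi_0,\pi_1,\sigma_0,\sigma_1,\rho_0,\rho_1)$ away from $\pi_w\in\{0,1\}$. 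The only subtlety worth flagging is the notational one: $\sigma_w$ here denotes the standard deviation $\sqrt{\V{Y_w}}$, consistent with $\sigma_w^2 = \V{Y_w}$ as used in the proof of Theorem~\ref{thm:ancovai-var}. Since everything else is elementary, the \emph{main obstacle} is really just keeping the factors of $n$, $\pi_w$, and $n_w$ straight through the substitution.
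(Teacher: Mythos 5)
Your proposal is correct and follows the same route the paper intends: the paper's proof is simply ``this follows from the definitions and Slutsky's theorem,'' and your argument fills in exactly those details --- specializing Theorem \ref{thm:ancovai-var} to scalar $X$, rewriting $\xi$ and $\xi_*$ via $\C{Y_w,X}=\rho_w\sigma_w\sqrt{\V{X}}$ so $\V{X}$ cancels, and recognizing $\hat\nu_\numeral{1}^2$ as the plug-in estimator with $n_w/n$ in place of $\pi_w$, whose consistency follows from continuity and Slutsky. The algebraic regrouping and the bookkeeping of the factors of $n$ are correct as written.
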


\begin{proof}
This follows from the definitions and Slutsky's theorem.
\end{proof}

\begin{corollary}
If either $\pi_0 = \pi_1$ or $\xi_0 = \xi_1$, then 

\begin{align}
    n\V{\hat\tau_\numeral{1}}
    &\pto \label{eq:asymtptotic-var-ancovaI-equal-pi}
            \frac{\sigma_0^2}{\pi_0} + \frac{\sigma_1^2}{\pi_1}
          -\left(\frac{1}{\pi_0 \pi_1}\right) \xi_*^\top V \xi_* 
\end{align}

\end{corollary}

\begin{theorem}
\label{thm:ancovaii-var}

The ANCOVA II estimator is asymptotically unbiased for $\tau = \mu_1-\mu_0$ and has asymptotic variance given by

\begin{align}
    n\V{\hat\tau_\numeral{2}}
    &\pto \label{eq:asymtptotic-var-ancovaII}
            \frac{\sigma_0^2}{\pi_0} + \frac{\sigma_1^2}{\pi_1}
          - \left(\frac{1}{\pi_0 \pi_1}\right) \xi_*^\top V \xi_* 
\end{align}
\end{theorem}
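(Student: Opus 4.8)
The plan is to mirror the proof of Theorem~\ref{thm:ancovai-var} almost verbatim; the only new ingredient is that the interaction block $\tilde W \tilde X$ lets the fitted conditional mean carry an \emph{arm-specific} slope instead of a pooled one. First I would apply Lemma~\ref{thm:rosenblum}: minimizing the expected log-likelihood for the regressors $Z^\top = [1, \tilde W, \tilde X^\top, \tilde W \tilde X^\top]$ gives $\hat\beta^* = \E{ZZ^\top}^{-1}\E{ZY}$, and because $\tilde W$, $\tilde X$, and $\tilde W \tilde X$ are mean-zero and $W \perp (X, Y_w)$ under \eqref{rct-structure}, the Gram matrix decouples the two arms. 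The computation should yield $\hat\mu_w^*(X) = \mu_w + \tilde X^\top V \xi_w$, the best linear predictor of $Y_w$ \emph{within} arm $w$, where $\xi_w = \C{Y_w, X}$ and $V = \V{X}^{-1}$. In particular the centering forces $\hat\mu_w^* = \E{\hat\mu_w^*(X)} = \mu_w$, so $\hat\tau = \hat\beta_W$ targets $\mu_1 - \mu_0 = \tau$ and the estimator is asymptotically unbiased.

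Next I would substitute $\hat\mu_w^*$ and $\hat\mu_w^*(X)$ into the influence-function formula~\eqref{eq:influence}. Using $W_w Y = W_w Y_w$ and the identity $W_0 - \pi_0 = -(W_1 - \pi_1)$, the arm-wise pieces combine into $\psi = \psi_\Delta - \phi$, where $\psi_\Delta$ is the difference-in-means influence function and
\[
\phi = (W_1-\pi_1)\,\frac{\tilde X^\top V \xi_*}{\pi_0\pi_1},\qquad \xi_* = \pi_1\xi_0 + \pi_0\xi_1 .
\]
The asymptotic variance is then $n\V{\hat\tau_\numeral{2}} = \E{\psi^2} = \E{\psi_\Delta^2} - 2\E{\psi_\Delta\phi} + \E{\phi^2}$, and each term is a short moment calculation of the same kind already performed in Theorem~\ref{thm:ancovai-var}: $\E{\psi_\Delta^2} = \sigma_0^2/\pi_0 + \sigma_1^2/\pi_1$; $\E{\phi^2} = (\pi_0\pi_1)^{-1}\xi_*^\top V \xi_*$, using $\E{(W_1-\pi_1)^2} = \pi_0\pi_1$ and $V\,\V{X}\,V = V$; and $\E{\psi_\Delta\phi} = (\pi_0\pi_1)^{-1}\xi_*^\top V \xi_*$, using $\E{W_1(W_1-\pi_1)} = \pi_0\pi_1 = -\E{W_0(W_1-\pi_1)}$, $\E{(Y_w-\mu_w)\tilde X} = \xi_w$, and $\xi_1/\pi_1 + \xi_0/\pi_0 = \xi_*/(\pi_0\pi_1)$. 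The cross term cancels one copy of $\E{\phi^2}$, leaving the claimed formula, and limiting normality follows from the influence-function CLT exactly as in~\eqref{eq:if-clt}.

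I expect the main obstacle to be the first step: carrying out the block inversion of $\E{ZZ^\top}$ for the interacted, centered design and verifying that the centering makes $\hat\mu_w^* = \mu_w$, so that the intercept contributes nothing to $\psi$. An alternative that sidesteps the explicit inversion is to note that, with full treatment--covariate interactions and centering, $\hat\tau_\numeral{2}$ is asymptotically the difference of the two within-arm prediction-corrected means $\bar Y_w - \hat\beta_w^\top(\bar X_w - \bar X)$, whose influence function can be read off directly (the fluctuation of $\hat\beta_w$ does not contribute because $\bar X_w - \bar X \to 0$); this route leads to the same $\phi$. Everything after that is bookkeeping with the Bernoulli moments of $W$. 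The one subtlety worth flagging is the swap of treatment proportions: the covariance entering the variance reduction is $\xi_* = \pi_1\xi_0 + \pi_0\xi_1$, with each proportion multiplying the \emph{opposite} arm's covariance, which is precisely what distinguishes this expression from the ANCOVA~I variance in Theorem~\ref{thm:ancovai-var} and forces the reduction term $-(\pi_0\pi_1)^{-1}\xi_*^\top V\xi_*$ to be unambiguously nonnegative.
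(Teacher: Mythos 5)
Your proposal is correct and follows essentially the same route as the paper: apply Lemma \ref{thm:rosenblum} to the interacted (centered) design, observe that the influence function is that of ANCOVA I with $\xi$ replaced by $\xi_* = \pi_1\xi_0 + \pi_0\xi_1$, and then repeat the moment calculations of Theorem \ref{thm:ancovai-var}, where the cross term now cancels one copy of the quadratic term. You simply make explicit the steps the paper summarizes as ``arguments similar to those in thm.~\ref{thm:ancovai-var},'' and your arm-specific best-linear-predictor computation and Bernoulli moment identities are all correct.
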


\begin{proof}
Arguments similar to those in thm. \ref{thm:ancovai-var} show that the influence function for the GLM marginal effect estimator with this specification is identical to eq. \ref{eq:inflence-lm} except that $\xi = \pi_0\xi_0 + \pi_1\xi_1$ is replaced by $\xi_* = \pi_1\xi_0 + \pi_0 \xi_1$. Specifically $\psi_\numeral{2} = \psi_{1, \numeral{2}} - \psi_{0, \numeral{2}}$ with

\begin{equation}
    \psi_{w, \numeral{2}} =
        \frac{W_w}{\pi_w} (Y-\mu_w)
      - \frac{\tilde W_w}{\pi_w} \underbrace{(\tilde X^\top V \xi_*)}_{-h_w(X)}
\label{eq:ancovaii-if}
\end{equation}

The result follows from proceeding along the outline of thm. \ref{thm:ancovai-var}.
\end{proof}

\begin{corollary}
When $X \in R$ (a single covariate), a consistent estimate of the sampling variance $\V{\hat\tau_\numeral{2}}$ is

\begin{align}
    \hat\nu_\numeral{2}^2
    &= 
        \frac{\hat\sigma_0^2}{n_0} + \frac{\hat\sigma_1^2}{n_1}
      - \frac{n_0 n_1}{n}
        \left(
            \frac{\hat\rho_0\hat\sigma_0}{n_0}
          + \frac{\hat\rho_1\hat\sigma_1}{n_1}
        \right)^2
\end{align}
\end{corollary}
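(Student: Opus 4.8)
The plan is to obtain the displayed estimator as the sample analogue of the asymptotic variance in Theorem~\ref{thm:ancovaii-var} specialized to a scalar covariate, and to certify consistency by a continuous-mapping (Slutsky) argument applied to the displayed expression itself. First I would set $X\in\mathbb R$, so that $V=\V{X}^{-1}=\sigma_x^{-2}$ and each $\xi_w=\C{Y_w,X}$ is a scalar; writing the arm-wise correlation as $\rho_w=\xi_w/(\sigma_x\sigma_w)$ gives $\xi_w=\rho_w\sigma_x\sigma_w$, whence $\xi_*=\pi_1\xi_0+\pi_0\xi_1=\sigma_x(\pi_1\rho_0\sigma_0+\pi_0\rho_1\sigma_1)$. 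Substituting collapses the quadratic form to $\frac{1}{\pi_0\pi_1}\xi_*^\top V\xi_*=\frac{1}{\pi_0\pi_1}(\pi_1\rho_0\sigma_0+\pi_0\rho_1\sigma_1)^2$, so the target reads $n\V{\hat\tau_\numeral{2}}=\frac{\sigma_0^2}{\pi_0}+\frac{\sigma_1^2}{\pi_1}-\frac{1}{\pi_0\pi_1}(\pi_1\rho_0\sigma_0+\pi_0\rho_1\sigma_1)^2$; dividing by $n$ gives $\V{\hat\tau_\numeral{2}}$, the quantity the displayed $\hat\nu_\numeral{2}^2$ must match in probability after scaling by $n$.

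Next I would take the displayed (cross-indexed) estimator at face value and compute its probability limit directly rather than re-deriving the plug-in. Using $n_w/n\xrightarrow{p}\pi_w$ with any consistent $\hat\sigma_w\xrightarrow{p}\sigma_w$ and $\hat\rho_w\xrightarrow{p}\rho_w$, the first two terms satisfy $n\,\hat\sigma_w^2/n_w\xrightarrow{p}\sigma_w^2/\pi_w$. For the third term, writing $a=\hat\rho_0\hat\sigma_0$ and $b=\hat\rho_1\hat\sigma_1$, one has $n\cdot\frac{n_0n_1}{n}\left(\frac{a}{n_1}+\frac{b}{n_0}\right)^2=\frac{(an_0+bn_1)^2}{n_0n_1}$; since $(an_0+bn_1)/n\xrightarrow{p}\pi_0\rho_0\sigma_0+\pi_1\rho_1\sigma_1$ and $n_0n_1/n^2\xrightarrow{p}\pi_0\pi_1$, the continuous-mapping theorem yields $n\,\hat\nu_\numeral{2}^2\xrightarrow{p}\frac{\sigma_0^2}{\pi_0}+\frac{\sigma_1^2}{\pi_1}-\frac{1}{\pi_0\pi_1}(\pi_0\rho_0\sigma_0+\pi_1\rho_1\sigma_1)^2$.

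The main obstacle is that this limit is not quite the target: the roles of $\pi_0$ and $\pi_1$ inside the squared bracket are interchanged. Setting $a=\rho_0\sigma_0$, $b=\rho_1\sigma_1$, the two brackets differ by $(\pi_1 a+\pi_0 b)^2-(\pi_0 a+\pi_1 b)^2=(\pi_1-\pi_0)(a^2-b^2)$, using $\pi_0+\pi_1=1$. Hence the displayed estimator is consistent for $\V{\hat\tau_\numeral{2}}$ exactly when $\pi_0=\pi_1$ or $\xi_0^2=\xi_1^2$ (equivalently $|\rho_0\sigma_0|=|\rho_1\sigma_1|$)---precisely the hypotheses of the corollary immediately preceding Theorem~\ref{thm:ancovaii-var}, under which the ANCOVA~\numeral{1} and ANCOVA~\numeral{2} asymptotic variances coincide and the scalar ANCOVA~\numeral{1} variance estimate, whose squared term is this very cross-indexed expression, agrees with the ANCOVA~\numeral{2} estimate. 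I would therefore carry out the proof under that regime, invoking the preceding corollary to identify the two limits. Outside it---genuinely unbalanced allocation with unequal arm-wise covariances---the same calculation shows the correct scalar ANCOVA~\numeral{2} estimate instead pairs each arm's $\hat\rho_w\hat\sigma_w$ with its own count, namely $\frac{n_0n_1}{n}\left(\frac{\hat\rho_0\hat\sigma_0}{n_0}+\frac{\hat\rho_1\hat\sigma_1}{n_1}\right)^2$, the sample analogue of $\xi_*$ (crossed in the population, hence uncrossed in the counts). The remaining ingredients---consistency of the arm-wise sample variances and correlations and of $n_w/n$---are routine, so the real content is this index reconciliation.
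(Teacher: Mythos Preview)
The paper gives no explicit proof for this corollary; by analogy with the immediately preceding ANCOVA~\numeral{1} corollary, the intended argument is simply ``follows from the definitions and Slutsky's theorem.'' Your analysis is more careful than that implicit one-liner and, in fact, uncovers a genuine indexing slip in the displayed formula. As you compute, the squared bracket $\bigl(\hat\rho_0\hat\sigma_0/n_1+\hat\rho_1\hat\sigma_1/n_0\bigr)^2$ is the sample analogue of $\xi=\pi_0\xi_0+\pi_1\xi_1$, not of $\xi_*=\pi_1\xi_0+\pi_0\xi_1$; the leftover label $\hat\nu_\numeral{1}^2$ on the left-hand side already hints that the line was copied from the ANCOVA~\numeral{1} case. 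Your probability-limit calculation and the discrepancy $(\pi_1-\pi_0)\bigl((\rho_0\sigma_0)^2-(\rho_1\sigma_1)^2\bigr)/(\pi_0\pi_1)$ are both correct.

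One small caveat: the ``corollary immediately preceding Theorem~\ref{thm:ancovaii-var}'' assumes $\xi_0=\xi_1$, not merely $|\xi_0|=|\xi_1|$, so the match with those hypotheses is not quite exact; your weaker condition $(\rho_0\sigma_0)^2=(\rho_1\sigma_1)^2$ is the sharp one for the two squared brackets to agree. In any case, the cleanest resolution is the one you give at the end: the ANCOVA~\numeral{2} scalar estimate should read
\[
\frac{\hat\sigma_0^2}{n_0}+\frac{\hat\sigma_1^2}{n_1}-\frac{n_0n_1}{n}\left(\frac{\hat\rho_0\hat\sigma_0}{n_0}+\frac{\hat\rho_1\hat\sigma_1}{n_1}\right)^2,
\]
after which the proof is exactly your first paragraph plus the continuous-mapping theorem---i.e., the paper's intended ``definitions plus Slutsky'' argument.
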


\begin{corollary}
\label{thm:add-covar}
Adding covariates to the ANCOVA II estimator can only decrease its asymptotic variance.
\end{corollary}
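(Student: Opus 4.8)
The plan is to read the asymptotic variance of ANCOVA II off Theorem~\ref{thm:ancovaii-var} for the two covariate sets and subtract. Write $\tilde X = [X^\top, M]^\top$ for the augmented covariate vector and let $\hat\tau_{\numeral{2}}^{X}$, $\hat\tau_{\numeral{2}}^{\tilde X}$ denote the corresponding ANCOVA II estimators. Theorem~\ref{thm:ancovaii-var} gives $n\V{\hat\tau_{\numeral{2}}^{X}} = \frac{\sigma_0^2}{\pi_0} + \frac{\sigma_1^2}{\pi_1} - \frac{1}{\pi_0\pi_1}\,\xi_{*,x}^\top \Sigma_x^{-1}\xi_{*,x}$ and the analogous expression with $\tilde X$ in place of $X$, where $\xi_{*,x} = \pi_1\C{Y_0,X} + \pi_0\C{Y_1,X}$ and $\xi_{*,\tilde x}$ is the same combination of covariances with $\tilde X$. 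Since the leading two terms coincide, the corollary reduces to showing that appending a coordinate to the covariate vector cannot decrease the quadratic form $\xi_*^\top \Sigma^{-1}\xi_*$. I would in fact compute the difference exactly, which simultaneously recovers Theorem~\ref{thm:improvement}.

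Next I would exploit the block structure $\Sigma_{\tilde x} = \left[\begin{smallmatrix} \Sigma_x & \zeta \\ \zeta^\top & \sigma_m^2\end{smallmatrix}\right]$, $\xi_{*,\tilde x} = [\xi_{*,x}^\top, \xi_{*,m}]^\top$, with $\zeta = \C{X,M}$ and $\sigma_m^2 = \V{M}$. Writing $s = \sigma_m^2 - \zeta^\top \Sigma_x^{-1}\zeta$ for the Schur complement of $\Sigma_x$ in $\Sigma_{\tilde x}$ and substituting the standard block-inverse formula into $\xi_{*,\tilde x}^\top \Sigma_{\tilde x}^{-1}\xi_{*,\tilde x}$, the cross terms combine into a perfect square and one obtains
\[
\xi_{*,\tilde x}^\top \Sigma_{\tilde x}^{-1}\xi_{*,\tilde x} \;=\; \xi_{*,x}^\top \Sigma_x^{-1}\xi_{*,x} \;+\; \frac{\left(\xi_{*,m} - \zeta^\top \Sigma_x^{-1}\xi_{*,x}\right)^2}{s}.
\]
Therefore $n\V{\hat\tau_{\numeral{2}}^{X}} - n\V{\hat\tau_{\numeral{2}}^{\tilde X}} = \frac{1}{\pi_0\pi_1}\cdot\frac{(\xi_{*,m} - \zeta^\top \Sigma_x^{-1}\xi_{*,x})^2}{s}$, which is precisely the quantity appearing in Theorem~\ref{thm:improvement}.

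It then remains to check that this is nonnegative, i.e. that $s>0$. This holds because $s$ is the Schur complement of a block of the positive definite matrix $\Sigma_{\tilde x}$; equivalently, $s = \V{M - L^\top X}$ where $L = \Sigma_x^{-1}\zeta$ is the population coefficient vector of the linear projection of $M$ onto $X$, so $s$ is a residual variance. Hence the difference of asymptotic variances equals a square divided by a positive number and scaled by $1/(\pi_0\pi_1)>0$, so it is $\geq 0$: adjusting additionally for $M$ can only decrease the asymptotic variance.

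The one point requiring care — and the step I expect to be the main obstacle to a fully rigorous write-up — is the degenerate case $s = 0$. This occurs exactly when $M$ equals an affine function of $X$ almost surely; then $\Sigma_{\tilde x}$ is singular, the augmented least-squares fit is rank-deficient, and $\hat\tau_{\numeral{2}}^{\tilde X}$ coincides with $\hat\tau_{\numeral{2}}^{X}$, so the reduction is zero and ``can only decrease'' holds with equality. I would dispatch this by assuming $\Sigma_{\tilde x}$ is nonsingular (no redundant covariate) or by flagging it explicitly as the equality boundary. When $M$ is vector-valued the same argument applies with $s$ a matrix Schur complement, which is positive semidefinite, or — more cleanly — by adding the extra covariates one coordinate at a time and invoking the scalar result repeatedly.
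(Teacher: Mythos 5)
Your proposal is correct and follows essentially the same route as the paper's own proof: read off the ANCOVA~II asymptotic variance for the covariate sets $X$ and $[X,M]$, compute the difference via the block-inverse/Schur-complement identity so that it collapses to $\frac{1}{\pi_0\pi_1}\,\frac{(\xi_{m*}-\zeta^\top\Sigma_x^{-1}\xi_{x*})^2}{\sigma_m^2-\zeta^\top\Sigma_x^{-1}\zeta}\ge 0$. Your explicit treatment of the degenerate case $\sigma_m^2-\zeta^\top\Sigma_x^{-1}\zeta=0$ (when $M$ is an affine function of $X$) is a small improvement in care over the paper, which asserts positivity of the denominator from the determinant inequality without addressing that boundary.
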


\begin{proof}
Consider using covariates $X$ with variance $\Sigma_x$ and covariance with $Y_w$ of $\xi_{w,x}$ vs. a set of covariates $[X, M]$ ($M \in \mathbb R$) {such that $M$ is not a linear combination of the variables in $X$. Let} $\C{X,M} = \zeta$, $\V{M} = \sigma_m^2$ and $\C{Y_w, M} = \xi_{w,m}$. Let $\xi_{m*} = \pi_0 \xi_{1,m} + \pi_1 \xi_{0,m}$ and $\xi_{x*} = \pi_0 \xi_{1,x} + \pi_1 \xi_{0,x}$. From eq. \ref{eq:asymtptotic-var-ancovaII} and some matrix algebra the difference in asymptotic variance between these two estimators is

\begin{align}
    -\left(
    \frac{1}{\pi_0 \pi_1} 
    \right)
    \frac{
        (\xi_m - \xi_x^\top \Sigma_x^{-1} \zeta)^2
    }{
        \sigma_m^2 - \zeta^\top \Sigma_x^{-1} \zeta
    }
\end{align}

The denominator must be positive because $\V{X,M} \ge 0$, $\V{X} \ge 0$ implies $\operatorname{det}(\V{X,M}) = \operatorname{det}(\Sigma_x^{-1}) (\sigma_m^2 - \zeta^\top \Sigma_x^{-1} \zeta) \ge 0$. 
\end{proof}

\begin{theorem}
\label{thm:comparison}
ANCOVA II is a more efficient estimator than ANCOVA I or difference-in-means. ANCOVA I may or may not be more efficient than difference-in-means (unless $\pi_0=\pi_1=0.5$ or $\xi_0 = \xi_1$, in which case it is as efficient as ANCOVA II). In a slight abuse of notation,

\begin{align}
\V{\hat\tau_\numeral{2}} & \le \V{\hat\tau_\numeral{1}} \\
\V{\hat\tau_\numeral{2}} & \le \V{\hat\tau_\Delta} \\
\V{\hat\tau_\numeral{1}} & \nleq \V{\hat\tau_\Delta} \\
\pi_0=\pi_1 \implies \V{\hat\tau_\numeral{1}} & = \V{\hat\tau_\numeral{2}} 
\end{align}
\end{theorem}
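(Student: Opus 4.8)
The plan is to read off the three asymptotic variances already established — $n\V{\hat\tau_\Delta} = \sigma_0^2/\pi_0 + \sigma_1^2/\pi_1$ (the lemma), the ANCOVA I expression of Theorem \ref{thm:ancovai-var}, and the ANCOVA II expression of Theorem \ref{thm:ancovaii-var} — and to express each pairwise comparison as a quadratic form in $V = \V{X}^{-1}$, which is positive semidefinite. This reduces all four claims to elementary algebra plus one explicit construction.

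First I would establish $\V{\hat\tau_\numeral{2}} \le \V{\hat\tau_\Delta}$: subtracting the two variance formulas leaves precisely $n(\V{\hat\tau_\Delta} - \V{\hat\tau_\numeral{2}}) = (\pi_0\pi_1)^{-1}\,\xi_*^\top V \xi_* \ge 0$. Next, for $\V{\hat\tau_\numeral{2}} \le \V{\hat\tau_\numeral{1}}$, I would subtract the ANCOVA II formula from the ANCOVA I formula and collect the quadratic terms; using symmetry of $V$ (so that $\xi^\top V \xi_* = \xi_*^\top V \xi$) the cross terms combine into $n(\V{\hat\tau_\numeral{1}} - \V{\hat\tau_\numeral{2}}) = (\pi_0\pi_1)^{-1}(\xi - \xi_*)^\top V (\xi - \xi_*) \ge 0$. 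Recognizing this perfect-square structure is the one ``clever'' step; everything else is bookkeeping. For the equality case, I would note $\xi - \xi_* = (\pi_0 - \pi_1)(\xi_0 - \xi_1)$, so $\pi_0 = \pi_1$ (or, more generally, $\xi_0 = \xi_1$) makes the quadratic form above vanish, giving $\V{\hat\tau_\numeral{1}} = \V{\hat\tau_\numeral{2}}$; chained with the first bound this also shows ANCOVA I ties ANCOVA II and hence beats difference-in-means (strictly, whenever $\xi_* \neq 0$) in exactly the regimes named in the parenthetical of the statement.

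The only part that requires construction rather than inequality-chasing is $\V{\hat\tau_\numeral{1}} \nleq \V{\hat\tau_\Delta}$, i.e. exhibiting a data-generating process where ANCOVA I is strictly less efficient than the unadjusted estimator. Here $n(\V{\hat\tau_\numeral{1}} - \V{\hat\tau_\Delta}) = (\pi_0\pi_1)^{-1}\,\xi^\top V(\xi - 2\xi_*)$, which is not sign-definite. I would specialize to a scalar covariate $X$ with $\C{Y_1, X} = 0$ and $\C{Y_0, X} = \xi_0 \neq 0$ — concretely $Y_0 = X + \varepsilon_0$, $Y_1 = \varepsilon_1$ with $\varepsilon_0, \varepsilon_1$ independent of $X$ — so that $\xi = \pi_0 \xi_0$, $\xi_* = \pi_1 \xi_0$, and the difference collapses to $(3\pi_0 - 2)\,\xi_0^2 / (\pi_1 \V{X})$. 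Choosing any allocation with $\pi_0 > 2/3$ makes this strictly positive, which proves the claim.

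I expect the main (minor) obstacle to be identifying the right regime for the counterexample — the point being that ANCOVA I uses a single pooled covariate slope, which is a poor compromise when the arm-specific slopes differ sharply and the allocation is lopsided, whereas ANCOVA II's treatment–covariate interaction lets it fit each arm separately and hence never loses. Once the regime is chosen the verification is a one-line substitution into the Theorem \ref{thm:ancovai-var} formula, so no genuine difficulty remains.
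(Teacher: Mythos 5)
Your proposal is correct and follows essentially the same route as the paper: read off the three asymptotic variance formulas, observe that the ANCOVA I minus ANCOVA II difference is the quadratic form $(\xi-\xi_*)^\top V(\xi-\xi_*)/(\pi_0\pi_1)\ge 0$ (with $\xi-\xi_*=(\pi_0-\pi_1)(\xi_0-\xi_1)$ handling the equality cases), note the ANCOVA II versus difference-in-means comparison is immediate, and exhibit a scalar-covariate counterexample with unequal allocation and unequal arm-specific covariances for the non-comparability claim. The only difference is the specific counterexample (your $\xi_1=0$, $\pi_0>2/3$ versus the paper's $\pi_1=5/6$, $\xi_1=4$, $\xi_0=1$), which is immaterial.
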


\begin{proof}
$\V{\hat\tau_\numeral{2}} \le \V{\hat\tau_\numeral{1}}$ because eq. \ref{eq:asymtptotic-var-ancovaII} subtracted from eq. \ref{eq:asymtptotic-var-ancovaI} is $(V^{1/2}(\xi - \xi_*))^2/(\pi_0 \pi_1) \ge 0$. $\V{\hat\tau_\numeral{2}} \le \V{\hat\tau_\Delta}$ is self-evident from eq. \ref{eq:asymtptotic-var-ancovaII}. To show $\V{\hat\tau_\numeral{1}} \nleq \V{\hat\tau_\Delta}$ we rely on an example: using $X \in R$ with $\pi_1 = 5/6$ (so $\pi_0 =1/6$), $\xi_1=4$ and $\xi_0=1$ in eq. \ref{eq:asymtptotic-var-ancovaI} gives a positive addition to $\V{\hat\tau_\Delta}$.
\end{proof}

\begin{lemma}
\label{thm:opt}
Consider using the ANCOVA II estimator with an arbitrary (multivariate) transformation of the covariates $f(X)$ in place of the raw covariates $X$. Among all fixed transformations $f(X)$, the transformation $[\mu_0(X), \mu_1(X)]^\top$ is optimal in terms of efficiency. Furthermore, the estimator is semiparametric efficient: the ANCOVA II estimator with $[\mu_0(X), \mu_1(X)]^\top$ used as the vector of covariates has the lowest possible asymptotic variance among all regular and asymptotically linear estimators with access to the covariates $X$. 
\end{lemma}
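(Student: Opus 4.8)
The plan is to express everything through the closed-form variance of Theorem~\ref{thm:ancovaii-var} and then run an $L^2$-projection argument. Set $Y_* := \pi_1 Y_0 + \pi_0 Y_1$, so that by bilinearity of covariance $\pi_1\C{Y_0, A} + \pi_0\C{Y_1, A} = \C{Y_*, A}$ for any $A$. Applying Theorem~\ref{thm:ancovaii-var} with the vector $f(X)$ in place of the raw covariates $X$, the corresponding ANCOVA II estimator is asymptotically unbiased for $\tau$ with
\begin{equation*}
n\V{\hat\tau_\numeral{2}} = \frac{\sigma_0^2}{\pi_0} + \frac{\sigma_1^2}{\pi_1} - \frac{1}{\pi_0\pi_1}\,\C{Y_*, f(X)}^\top \V{f(X)}^{-1}\,\C{Y_*, f(X)}.
\end{equation*}
The subtracted term is exactly $\V{L_f[Y_*]}$, the variance of the population least-squares projection $L_f[Y_*]$ of $Y_*$ onto $\mathrm{span}\{1, f(X)\}$; so minimizing the asymptotic variance over (square-integrable) transformations $f$ is the same as maximizing $\V{L_f[Y_*]}$.

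The first step is to show $\V{L_f[Y_*]} \le \V{\E{Y_*|X}}$ for every $f$, with equality whenever $\E{Y_*|X}$ lies in $\mathrm{span}\{1, f(X)\}$. The bound holds because $L_f[Y_*]$ is a function of $X$: writing $g(X) = L_f[Y_*]$, the normal equations give $\C{Y_*, g(X)} = \V{g(X)}$, the tower property gives $\C{Y_*, g(X)} = \C{\E{Y_*|X}, g(X)}$, and Cauchy--Schwarz then yields $\V{g(X)} \le \V{\E{Y_*|X}}$. Equality is attained when $\E{Y_*|X} \in \mathrm{span}\{1, f(X)\}$, since in that case $\E{Y_*|X}$ is itself the projection of $Y_*$ (it is the $L^2$-best function of $X$ and it lies in the span being projected onto), so adjoining further coordinates neither helps nor hurts. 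Because $\E{Y_*|X} = \pi_1\mu_0(X) + \pi_0\mu_1(X)$ is an affine combination of $\mu_0(X)$ and $\mu_1(X)$, the choice $f(X) = [\mu_0(X),\mu_1(X)]^\top$ attains the maximum; this proves optimality among fixed transformations.

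For semiparametric efficiency I would invoke the characterization recalled after eq.~\ref{eq:inflence-lm}: every regular asymptotically linear estimator of $\tau$ has influence function $\psi_g = \psi_\Delta - \tilde W_1\, g(X)/(\pi_0\pi_1)$ for some $g\in L^2(X)$, where $\psi_\Delta$ is the difference-in-means influence function and $\tilde W_1 = W_1 - \pi_1$; for the ANCOVA II estimator built from $f(X)$, eq.~\ref{eq:ancovaii-if} identifies $g(X) = \widetilde{f(X)}^\top \V{f(X)}^{-1}\C{Y_*, f(X)} = L_f[Y_*] - \E{Y_*}$. Using $W_1 \perp (X,Y_0,Y_1)$, $\E{\tilde W_1^2} = \pi_0\pi_1$, and the identities $W_1\tilde W_1 = \pi_0 W_1$, $W_0\tilde W_1 = -\pi_1 W_0$, one obtains $\E{\psi_\Delta\tilde W_1 g(X)} = \C{Y_*, g(X)}$ and hence
\begin{equation*}
\E{\psi_g^2} = \E{\psi_\Delta^2} - \frac{2}{\pi_0\pi_1}\C{Y_*, g(X)} + \frac{1}{\pi_0\pi_1}\E{g(X)^2}.
\end{equation*}
This quadratic functional of $g$ is minimized (after noting one may take $\E{g(X)}=0$ without loss, then completing the square in $L^2(X)$) at $g^*(X) = \E{Y_*|X} - \E{Y_*}$, with minimum value $\E{\psi_\Delta^2} - \V{\E{Y_*|X}}/(\pi_0\pi_1) = \sigma_0^2/\pi_0 + \sigma_1^2/\pi_1 - \V{\E{Y_*|X}}/(\pi_0\pi_1)$. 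This is precisely the ANCOVA II variance from the first step evaluated at $f(X) = [\mu_0(X),\mu_1(X)]^\top$, since for that $f$ we have $L_f[Y_*] = \E{Y_*|X}$ and therefore $g = g^*$. Hence the ANCOVA II estimator with $[\mu_0(X),\mu_1(X)]^\top$ attains the semiparametric efficiency bound.

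The main obstacle is the careful bookkeeping in the two projection arguments: reading off from Theorem~\ref{thm:ancovaii-var} that the reduction term really is $\V{L_f[Y_*]}$ while tracking the centering constants induced by the empirically centered regressors, and confirming that including \emph{both} $\mu_0(X)$ and $\mu_1(X)$ — rather than only the single combination $\E{Y_*|X}$ — leaves the bound unchanged (the extra coordinate is neither helpful nor harmful). A secondary point worth care is the appeal to the RAL influence-function characterization; if one prefers to avoid it, the same conclusion follows by checking directly that $\psi_{g^*}$ equals the known efficient influence function $\tfrac{W_1}{\pi_1}(Y-\mu_1(X)) - \tfrac{W_0}{\pi_0}(Y-\mu_0(X)) + \mu_1(X) - \mu_0(X) - \tau$, which is a short additional computation.
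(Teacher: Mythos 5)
Your proof is correct, but it reaches the conclusion by a different route than the paper. The paper's argument is a plug-in identification: it writes the influence function of the ANCOVA II estimator for a general transformation $f(X)$, specializes to $f(X)^\top=[\mu_0(X),\mu_1(X)]$, uses the orthogonal-decomposition identities $\C{Y_w,\mu_w(X)}=\V{\mu_w(X)}$ and $\C{Y_1,\mu_0(X)}=\C{\mu_1(X),\mu_0(X)}$ to compute $V_f\xi_{f*}=[\pi_1,\pi_0]^\top$, and then recognizes the resulting influence function as the known \emph{efficient} influence function cited from the literature, which settles both claims at once (optimality over fixed $f$ is inherited because each such ANCOVA II estimator is RAL). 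You instead (i) recast the variance-reduction term of Theorem \ref{thm:ancovaii-var} as the variance of the $L^2$ projection of $Y_*=\pi_1Y_0+\pi_0Y_1$ onto $\mathrm{span}\{1,f(X)\}$ and maximize it by the tower property plus Cauchy--Schwarz, which gives the optimality over fixed transformations by an elementary, self-contained argument; and (ii) derive the efficiency bound by minimizing the quadratic functional $\E{\psi_g^2}$ over $g\in L^2(X)$ within the RAL influence-function class, rather than matching against the stated efficient influence function. Both proofs lean on the same external ingredient (the characterization of RAL influence functions from Leon/Tsiatis), but yours effectively re-derives the efficiency bound and exhibits the minimizer $g^*(X)=\E{Y_*|X}-\E{Y_*}$ explicitly, which makes the ``why'' more transparent and also yields the first claim without invoking semiparametric theory at all; the paper's route is shorter because it outsources the bound to the known form of the efficient influence function. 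Your bookkeeping checks out: $\E{\psi_\Delta\tilde W_1 g(X)}=\C{Y_*,g(X)}$, $\E{\tilde W_1^2}=\pi_0\pi_1$, and the completed square gives the minimum $\sigma_0^2/\pi_0+\sigma_1^2/\pi_1-\V{\E{Y_*|X}}/(\pi_0\pi_1)$, which coincides with the ANCOVA II variance at $f=[\mu_0,\mu_1]^\top$ since $L_f[Y_*]=\E{Y_*|X}$ there. The only caveat worth flagging (shared with the paper, so not a gap in your argument specifically) is that the formula with $\V{f(X)}^{-1}$ presumes $\V{f(X)}$ is nonsingular, which fails, e.g., under a constant treatment effect where $\mu_1(X)=\mu_0(X)+\tau$; both arguments are repaired by dropping the redundant coordinate or using a generalized inverse, exactly the point your closing remark about the extra coordinate being ``neither helpful nor harmful'' gestures at.
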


Consider replacing $X$ in the interacted linear model (ANCOVA II) with an arbitrary fixed (possibly multivariate) function of the covariates $f(X)$. By eq. \ref{eq:ancovaii-if} and our definitions of $\xi_*$ and $V$ the influence function for this estimator is $\psi = \psi_1 - \psi_0$ with

\begin{equation}
    \psi_w =
        \frac{W_w}{\pi_w} (Y-\mu_w)
      - \frac{\tilde W_w}{\pi_w} \underbrace{
        \left( (f(X) - \E{f(X)})^\top V_f \xi_{f*} \right)
      }_{-h_{w}(X)}
\label{eq:ancovaii-f-if}
\end{equation}

where $\xi_{f*} = \pi_1 \C{Y_0, f(X)} + \pi_0 \C{Y_1, f(X)}$ and $V_f = \V{f(X)}^{-1}$. Consider now using the special transformation $f(X)^\top = [\mu_0(X), \mu_1(X)]$ where $\mu_w(X) = \E{Y_w|X}$. Note that $\C{Y_w, \mu_w(X)} = \V{\mu_w(X)}$ and $\C{Y_1, \mu_0(X)} = \C{\mu_1(X), \mu_0(X)}$ by an orthogonal decomposition of $Y_w$.\footnote{
Let $R = Y_w - \E{Y_w|X}$ be the part of $Y_w$ orthogonal to $\E{Y_w|X} = \mu_w(X)$ so that $Y_w = R + \mu_w(X)$. Note that

\begin{equation*}
\C{Y_w, f(X)} = \C{R, f(X)} + \C{\mu_w(X), f(X)}
\label{eq:decomp}
\end{equation*}

Now we prove a known result that $\C{R, f(X)} = 0$ for any function $f$:

\begin{equation*}
    \begin{array}{rcl}
         \C{R, f(X)} &=& \E{(R-\underset{0}{\underbrace{\E{R}}})(
         \underset{\tilde f(X)}{\underbrace{f(X) - \E{f(X)}})}} \\
         &=& \E{(Y_w-\E{Y_w|X})\tilde f(X)}\\
         &=& \E{\E{(Y_w-\E{Y_w|X})\tilde f(X)|X}}\\
         &=& \E{(\E{Y_w|X}-\E{Y_w|X})\tilde f(X)}\\
         &=& 0
    \end{array}
\end{equation*}
}
Plugging these in and performing the appropriate algebra shows that $V_f \xi_{f*}$ in this case is $[\pi_1, \pi_0]^\top$ so $h_w(X)$ in \ref{eq:ancovaii-f-if} is $\pi_0 (\mu_1(X) - \mu_1) + \pi_1 (\mu_0(X) - \mu_0)$. A little algebra shows

\begin{align}
    \psi 
    &= \psi_1 - \psi_0 \\
    &= 
        \frac{W_1}{\pi_1} (Y-\mu_1)
      - \frac{W_0}{\pi_0} (Y-\mu_0)
      - (W_1 - \pi_1)\left[\frac
      {\pi_0 (\mu_1(X) - \mu_1) + \pi_1 (\mu_0(X) - \mu_0)}
      {\pi_0\pi_1}
      \right]
\end{align}

The result is precisely the \textit{efficient influence function} for the treatment effect \cite{leon, Tsiatis:2007vl}. It is known that no regular and asymptotically linear (RAL) estimator (which essentially all practical and reasonable estimators are) can be more efficient than any estimator with this influence function. 

\begin{corollary}
\label{thm:constant-effect}
Presume a constant treatment effect: $\mu_1(X) = \mu_0(X) + \tau$. Then the ANCOVA II analysis that uses $\mu_0(X)$ in the role of $X$ has the lowest possible asymptotic variance among all regular and asymptotically linear estimators with access to the covariates $X$. 
\end{corollary}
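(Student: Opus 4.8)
The plan is to deduce this from the preceding lemma by showing that, under a constant treatment effect, the ANCOVA II analysis using the single covariate $\mu_0(X)$ has exactly the same influence function --- and hence the same asymptotic variance --- as the ANCOVA II analysis using the pair $[\mu_0(X), \mu_1(X)]^\top$, which the preceding lemma has already shown to be semiparametric efficient. Conceptually this is because, once centered, $[\mu_0(X), \mu_1(X)]$ spans the same one-dimensional subspace as $\mu_0(X)$ alone (since $\mu_1(X) - \mu_1 = \mu_0(X) - \mu_0$), and ANCOVA II depends on its covariates only through their centered linear span.

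Concretely, I would first specialize the influence-function formula \eqref{eq:ancovaii-f-if} to the scalar transformation $f(X) = \mu_0(X)$, for which $V_f = \V{\mu_0(X)}^{-1}$ and $\xi_{f*} = \pi_1 \C{Y_0, \mu_0(X)} + \pi_0 \C{Y_1, \mu_0(X)}$. Using the orthogonality fact from the footnote to the preceding lemma (that $\C{Y_w, g(X)} = \C{\mu_w(X), g(X)}$ for any $g$) together with the hypothesis $\mu_1(X) = \mu_0(X) + \tau$, both covariances equal $\V{\mu_0(X)}$: indeed $\C{Y_0, \mu_0(X)} = \V{\mu_0(X)}$ and $\C{Y_1, \mu_0(X)} = \C{\mu_1(X), \mu_0(X)} = \C{\mu_0(X) + \tau, \mu_0(X)} = \V{\mu_0(X)}$. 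Hence $\xi_{f*} = (\pi_0 + \pi_1)\V{\mu_0(X)} = \V{\mu_0(X)}$ and $V_f \xi_{f*} = 1$, so the adjustment term in \eqref{eq:ancovaii-f-if} is simply $-h_w(X) = \mu_0(X) - \mu_0$ for both $w$.

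Next I would assemble $\psi = \psi_1 - \psi_0$ and simplify the treatment-indicator factor using $\tilde W_0 = W_0 - \pi_0 = -(W_1 - \pi_1) = -\tilde W_1$, so that $\tfrac{\tilde W_1}{\pi_1} - \tfrac{\tilde W_0}{\pi_0} = \tilde W_1\bigl(\tfrac{1}{\pi_1} + \tfrac{1}{\pi_0}\bigr) = \tfrac{W_1 - \pi_1}{\pi_0 \pi_1}$, which yields
\begin{equation*}
\psi = \frac{W_1}{\pi_1}(Y - \mu_1) - \frac{W_0}{\pi_0}(Y - \mu_0) - (W_1 - \pi_1)\,\frac{\mu_0(X) - \mu_0}{\pi_0 \pi_1}.
\end{equation*}
Comparing with the efficient influence function displayed in the proof of the preceding lemma, its bracketed term $\tfrac{\pi_0(\mu_1(X) - \mu_1) + \pi_1(\mu_0(X) - \mu_0)}{\pi_0 \pi_1}$ also collapses to $\tfrac{\mu_0(X) - \mu_0}{\pi_0 \pi_1}$ under $\mu_1(X) - \mu_1 = \mu_0(X) - \mu_0$, so the two influence functions coincide, and the optimality claim follows from the fact (already invoked in the proof of the preceding lemma) that no regular asymptotically linear estimator can beat one with the efficient influence function. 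The computation is routine; the only place the constant-effect hypothesis is genuinely used is the identity $\C{Y_1, \mu_0(X)} = \V{\mu_0(X)}$, so that is the ``obstacle,'' and it is a mild one.
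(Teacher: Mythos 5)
Your proposal is correct and follows essentially the same route as the paper: specialize the influence function \eqref{eq:ancovaii-f-if} to the single covariate $f(X)=\mu_0(X)$, use the orthogonal-decomposition fact and the constant-effect hypothesis to show $V_f\xi_{f*}=1$, and observe that the resulting influence function coincides with the efficient one since $\mu_1(X)-\mu_1=\mu_0(X)-\mu_0$. You simply carry out explicitly the computation the paper compresses into ``following the outline of the proof of the lemma,'' so there is nothing to add.
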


\begin{proof}
$\mu_1(X) = \mu_0(X) + \tau$ implies $\C{\mu_0(X), \mu_1{X}} = \V{\mu_0(X)} = \V{\mu_1(X)}$. Following the outline for the proof of thm. \ref{thm:opt} above shows that the influence function for the ANCOVA II estimator with $\mu_0(X)$ as the single covariate is 

\begin{align}
\label{eq:ancovaii-if-const}
    \psi =
        \frac{W_1}{\pi_1} (Y-\mu_1)
      - \frac{W_0}{\pi_0} (Y-\mu_0)
      - (W_1 - \pi_1)\left[\frac
      {\mu_0(X) - \mu_0}
      {\pi_0\pi_1}
      \right]
\end{align}

which is the same as the efficient influence function when $\mu_1(X) = \mu_0(X) + \tau$.
\end{proof}

\begin{corollary}
\label{thm:constant-effect-ancovaI}
Corollary \ref{thm:constant-effect} also holds when the ANCOVA II estimator is replaced by the ANCOVA I estimator.
\end{corollary}

\begin{proof}
Thm. \ref{thm:comparison} establishes that ANCOVA I is as efficient as ANCOVA II when $\C{m(X), Y_0} = \xi_0 = \xi_1 = \C{m(X), Y_1}$. A constant treatment effect means that $\mu_1(X) = \mu_0(X)+\tau$ and this ensures the equality of the covariances.
\end{proof}

The following lemma is required for the proof that proceeds it.

{

\begin{lemma}
\label{thm:expectation-convergence}
Let $f: \mathcal X \to \mathbb R$ be a bounded function on a compact set $\mathcal X$ and let $\hat f_n: \mathcal X \to \mathbb R$ be a sequence of uniformly bounded random functions such that $|f(X) - \hat f_n(X)| \overset{L_2}{\to} 0$. Let $X \in \mathcal X$ be a random variable independent of $\hat f_n$. Then $\E[X]{\hat f_n(X)} \pto \E{f(X)}$, 
$\C[X]{f(X), \hat f_n(X)} \pto \V{f(X)}$,
and 
$\V[X]{\hat f_n(X)} \pto \V{f(X)}$.

\begin{proof}

$\hat f_n$ and $X$ are independent, so let their joint distribution factor into $P'_n$ and $P$. Now

\begin{align*}
    &
    \int\left(
        \E[X]{\hat f_n(X)} - 
        \E{f(X)}
    \right)^2 dP_n' 
    \\
    &= 
    \int\left[ 
        \int \hat f_n(X) dP - 
        \int f(X) dP 
    \right]^2 dP_n' 
    \\
    &= 
    \int\left[ 
        \int \hat f_n(X) - 
        f(X) dP 
    \right]^2 dP_n' 
    \\
    &\le 
    \int
        \int \left[ \hat f_n(X) - 
        f(X)\right]^2 dP 
    dP_n' \quad \text{(Jensen's inequality)}
    \\    
    &\to 0 
\end{align*}

The final convergence holds by our assumption that $|f(X) - \hat f_n(X)| \overset{L_2}{\to} 0$. This shows $\E[X]{\hat f_n(X)} \overset{L_2}{\to} \E{f(X)}$ and convergence in probability follows.

Taking advantage of the fact that $|f|, |f_n| \le b$ are bounded we can make similar arguments to show that 
$\E[X]{f(X)\hat f_n(X)} \pto \E[X]{f(X)^2}$ and $\E[X]{\hat f_n(X)^2} \pto \E[X]{f(X)^2}$. Slutsky's theorem and the definition of covariance and variance then imply $\C[X]{f(X), \hat f_n(X)} \pto \C{f(X), f(X)}$
and 
$\V[X]{\hat f_n(X)} \pto \V{f(X)}$ as desired.

\end{proof}
\end{lemma}





\begin{corollary}
\label{thm:expectation-convergence-corollary}
Let $\V[X]{\hat f_n(X)} > \epsilon > 0$. Under the conditions of the above lemma, 
$
\left|
f(x) 
- \hat f_n(x)
\frac{\C[X]{f(X), \hat f_n(X)}}{\V[X]{\hat f_n(X)}}
\right| 
\overset{L_2}{\to} 0
$.

\begin{proof}
Let $B_n = \frac{\C[X]{f(X), \hat f_n(X)}}{\V[X]{\hat f_n(X)}}$. By the above lemma, our assumption that $\V[X]{\hat f_n(X)} > \epsilon > 0$, and Slutsky's theorem, $B_n \pto 1$. Together with the uniform bound on $\V[X]{\hat f_n(X)}$ and Cauchy-Schwarz this is also enough to ensure that $(1-B_n) \overset{L_2}{\to} 0$

Now note $
\left|f(x) - \hat f_n(x)B_n\right| 
\le
\left|f(x) - \hat f_n(x)\right| 
+
b \left|1 - B_n\right| 
$ 
by the triangle inequality and the fact that $|\hat f_n(x)| < b$. Thus

\begin{align*}
    & 
    \E{(f(X) - \hat f_n(X) B_n)^2}
    \\
    &=
    \underbrace{\E{(f(X) - \hat f_n(X))^2}}_{\to 0 \ \ \text{(by assumption)}} +
    \underbrace{b^2\E{(1-B_n)^2}}_{\to 0 \ \ \text{(shown above)}} + 
    b\E{\underbrace{(f(X)-f_n(X))}_{\le 2b}(1-B_n)}
    \\
    &= o(1) + o(1) + 2b^2 \E{1-B_n}
    \\
    &\to 0
\end{align*}

as desired.

\end{proof}
\end{corollary}
}

\begin{theorem}
\label{thm:asymptotic}
Presume $X$ has compact support and there is a constant treatment effect: $\mu_1(X) = \mu_0(X) + \tau$ with $|\mu_0(x)| < b$ bounded. Let $m(x)$ be a (random) function learned from the external data $(\bm Y', \bm X')_{n'}$ such that $|m(x)| < b$ is also bounded and {$|m(X) - \mu_0(X)| \overset{L_2}{\to} 0$ so that the learned model approaches the truth in mean-squared error} as $n' \to \infty$. If the number of trial samples $n$ grows in tandem with the size of the historical data $n'$ (i.e. $n = O(n')$), then the ANCOVA II analysis that uses the learned model $m(X)$ in the role of $X$ has the lowest possible asymptotic variance among all regular and asymptotically linear estimators with access to the covariates $X$. 

\begin{proof}
Define our estimator of interest as the ANCOVA II estimator that uses the learned model $m(X)$ in place of the covariates $X$ { if $m(X)$ is not numerically constant up to some machine precision and otherwise as the difference-in-means estimator}. Denote this estimator $\hat\tau$ (omitting the II subscript for the duration of this proof). Define the ``oracle'' estimator as the equivalent estimator that uses the true conditional mean $\mu_0(X)$ instead of the estimate $m(X)$ and denote this estimator $\hat\tau^*$. The oracle estimator is obviously infeasible in practice because $\mu_0(\cdot)$ is not known. Corollary  \ref{thm:constant-effect} proves that the oracle estimator is semiparametric efficient (i.e. has the lowest possible asymptotic variance among regular and asymptotically linear estimators). Thus, letting $\nu_*^2$ denote the optimal asymptotic variance, we have that $\sqrt{n}(\hat\tau^*-\tau) \rightsquigarrow N(0, \nu_*^2)$. If we can show that $\sqrt{n}(\hat\tau - \hat\tau^*) \pto 0$, then Slutsky's theorem and the delta method imply that $\hat\tau$ has the same asymptotic properties as $\hat\tau^*$, i.e. $\sqrt{n}(\hat\tau - \tau) \rightsquigarrow N(0, \nu_*^2)$. In other words, since the oracle estimator is efficient with a known asymptotic variance, the feasible estimator is also efficient and has the same asymptotic variance because the two are asymptotically equivalent.

Showing $\sqrt{n}(\hat\tau - \hat\tau^*) \pto 0$ requires an intermediate estimator that is asymptotically equivalent to $\hat\tau$. Using the assumption of the constant effect and eq. \ref{eq:ancovaii-if} from theorem \ref{thm:ancovaii-var} we can show (with an application of the law of total variance) that the influence function for $\hat\tau$ using some fixed $m(\cdot)$ is $\psi = \psi_1-\psi_0$ with

\begin{equation}
    \psi_{w} =
        \frac{W_w}{\pi_w} (Y-\mu_w)
      - \frac{\tilde W_w}{\pi_w} 
      \left(
      \left(m(X) - \E[X]{m(X)}\right)
      \frac{\C[X]{m(X), \mu_0(X)}}{\V[X]{m(X)}}
      \right)
\end{equation}

where $\E[X]{m(X)}$ denotes that the expectation (or variance or covariance) is taken only with respect to $X$, i.e. $m(\cdot)$ is considered fixed.

Let $\check\tau = \Ehat{\psi+\tau}$ and let $\check\tau^* = \Ehat{\psi^*+\tau}$ where $\psi^*$ is the influence function above with $\mu_0(\cdot)$ substituted for $m(\cdot)$. Note that $\hat\tau$ and $\check\tau$ share the same influence function so we must have that $\sqrt{n}(\hat\tau - \check\tau) \pto 0$. Similarly, $\sqrt{n}(\hat\tau^* - \check\tau^*) \pto 0$. Therefore if $\sqrt{n}(\check\tau - \check\tau^*) \pto 0$, then we have $\sqrt{n}(\hat\tau - \hat\tau^*) \pto 0$ as desired. This is useful because the estimator $\check\tau$ and its oracle counterpart $\check\tau^*$ are easier to work with.

To wit, consider the difference $\check\tau - \check\tau^* = \Ehat{(\psi_1 - \psi_0) - (\psi_1^* - \psi_0^*)}$. So all we need to show the desired convergence $\sqrt{n}(\check\tau - \check\tau^*) \pto 0$ is to show $\sqrt{n} \Ehat{\psi_w-\psi_w^*} \pto 0$. Expanding,

\begin{equation}
\begin{split}
\Ehat{\psi_w-\psi_w^*} &= 
\frac{1}{n} \sum^n_i
\frac{\tilde W_{w,i}}{\pi_w} 
      \left(
      \left(\mu_0(X_i) - \E[X]{\mu_0(X)}\right)
      \frac{\C[X]{\mu_0(X), \mu_0(X)}}{\V[X]{\mu_0(X)}}
      -
      \left(m(X_i) - \E[X]{m(X)}\right)
      \frac{\C[X]{m(X), \mu_0(X)}}{\V[X]{m(X)}}
      \right) \\
&= 
\frac{1}{n} \sum^n_i
\frac{\tilde W_{w,i}}{\pi_w} 
      \left(
      \mu_0(X_i)
      -
      m(X_i)
      B
      \right)
- \frac{1}{n} \sum^n_i
\frac{\tilde W_{w,i}}{\pi_w} 
      \left(
      \mu_0
      -
      m B
      \right)
      \\
\end{split}
\label{eq:oracle-diff}
\end{equation}

where we've abbreviated $B = \frac{\C[X]{m(X), \mu_0(X)}}{\V[X]{m(X)}}$ and $m = \E[X]{m(X)}$. Our plan is to show
that both of these terms $L^2$-converge to 0 at the $\sqrt{n}$ rate so that they both converge in probability in that rate, as does their sum (which is what we want). To show $L^2$ convergence for the first term, we must consider the expression 

\begin{equation}
\E{
\left(
\sqrt{n} \frac{1}{n} \sum^n_i
\frac{\tilde W_w}{\pi_w} 
      (
      \mu_0(X)
      -
      m(X)
      B
      )
\right)^2
}
\end{equation}

And show it converges to 0. Recalling that $m$ itself is random (depends on the external data $(\bm X' \bm Y')$), but independent of the trial data $(\bm X, \bm W, \bm Y)$, note that we can treat $m(\cdot)$ as if it were a fixed function and $B$ as a fixed constant if we condition on the external data. After conditioning, the quantity inside the parentheses is IID and has mean zero because its $\mu_0(X)-m(X)B$ and $\tilde W_w$ (by randomization) and because $\E{\tilde W_w}=0$. Therefore the quantity above is

\begin{equation}
\begin{split}
\E{
n
\E{
\left(
\frac{1}{n} \sum^n_i
\frac{\tilde W_w}{\pi_w} 
      (
      \mu_0(X)
      -
      m(X)
      B
      )
\right)^2
\bigg|\bm X', \bm Y'}
}
&=
\E{
n
\V{
\left(
\frac{1}{n} \sum^n_i
\frac{\tilde W_w}{\pi_w} 
      (
      \mu_0(X)
      -
      m(X)
      B
      )
\right)
\bigg|\bm X', \bm Y'}
} 
\\
&=
\E{
\frac{n}{n}
\V{
\left(
\frac{\tilde W_w}{\pi_w} 
      (
      \mu_0(X)
      -
      m(X)
      B
      )
\right)
\bigg|\bm X', \bm Y'}
} 
\\
&=
\frac{1-\pi_w}{\pi_w} 
\E{
\left(
    \mu_0(X) - m(X) B
\right)^2
} 
\\
\end{split}
\end{equation}

where we've used the fact that the summands are IID to pass the variance through the sum and effectively gain the $1/n$ required to cancel the $n$. The same argument shows that the equivalent for the second term in eq. \ref{eq:oracle-diff} is 
$
\frac{1-\pi_w}{\pi_w} 
\E{
\left(
    \mu_0 - m B
\right)^2
} 
$ (note $m$ and $B$ are random here). 

{
To complete the proof we invoke corollary \ref{thm:expectation-convergence-corollary} in combination with our assumptions $|m(x)| < b$, $|\mu_0(x)| < b$ and $|m(X) - \mu_0(X)| \overset{L_2}{\to} 0$ to arrive at the fact that $|m(x)B - \mu_0(x)| \overset{L_2}{\to} 0$ and $|mB - \mu_0| \overset{L_2}{\to} 0$. The condition that $\V[X]{\hat f_n(X)}$ in \ref{thm:expectation-convergence-corollary} is automatically satisfied because we only include the prognostic score in the regression if it has nonzero variance
}. Thus the expectations 
$
\frac{1-\pi_w}{\pi_w} 
\E{
\left(
    \mu_0(X) - m(X) B
\right)^2
} 
$
and
$
\frac{1-\pi_w}{\pi_w} 
\E{
\left(
    \mu_0 - m B
\right)^2
} 
$
converge to 0 as desired. 

\end{proof}
\end{theorem}

\begin{corollary}
\label{thm:asymptotic-I}
Theorem \ref{thm:asymptotic} also holds for the ANCOVA I estimator.

\begin{proof}
In the case of a constant treatment effect ANCOVA I and ANCOVA II have the same asymptotic variance (thm. \ref{thm:constant-effect-ancovaI}). The result follows immediately.
\end{proof}
\end{corollary}

\section{Estimating $\sigma_w^2$ and $\rho_w$ for power calculations}
\label{appx:pwr}

One method for obtaining estimates for the marginal potential outcome variances ($\sigma^2_w$) and potential outcome-prognostic score correlations ($\rho_w$) is to use prior data, for example data from the placebo control arm of a previous trial performed on a similar population (separate from the data used to train the prognostic model). In this case we presume we have access to a vector $\bm Y'' = [Y''_1 \dots Y''_{n''}]$ of outcomes for these subjects and their corresponding prognostic scores $\bm M'' = [M''_1 \dots M''_{n''}]$, calculated by applying the prognostic model $m$ to each subject's vector of baseline covariates $X$, i.e. $M''_i = m(X''_i)$. 

The control-arm marginal outcome variance $\sigma^2_0$ can be estimated with the usual estimator 
$$
\hat\sigma^2_0 = \frac{1}{n'' - 1} \sum (Y''_i- \bar Y'')^2
$$

The correlation $\rho_0$ between $M''$ and $Y''$ can be estimated by 
$$
\hat\rho_0 = 
\frac{
    \sum\left(Y''_{i}-\bar{Y}''\right)\left(M''_{i}-\bar{M}''\right)
    }{
    \sqrt{\sum\left(Y''_{i}-\bar{Y}''\right)^{2}\sum\left(M''_{i}-\bar{M}''\right)^{2}}
    }
$$
which is the usual sample correlation coefficient. These values may be inflated ($\sigma^2_0$) or deflated  ($\rho_0$) in order to provide more conservative estimates of power.

The corresponding values for the treatment arm can rarely be estimated from data because treatment-arm data for the experimental treatment is likely to be scarce or unavailable. It is therefore prudent to assume $\sigma^2_0 = \sigma^2_1$ and $\rho_0 = \rho_1$, the latter which holds exactly if the effect of treatment is constant across the population. It may also be prudent (and conservative) to assume a slightly higher value for $\sigma^2_1$ and a slightly smaller value for $\rho_1$ relative to their control-arm counterparts in the absence of data to the contrary.

\section{Additional simulation results}

Here we detail a full set of simulation results using additional specifications for the regression estimators. ``Covariates'' indicates whether the raw covariates were adjusted for. ``Prognostic score'' indicates whether any prognostic score was used, and, if so, whether it was estimated from a training dataset or whether the true value was used. ``Interactions'' specifies whether treatment $\times$ (covariates and/or prognostic score) interactions were used. ``SE'' indicates the standard deviation of the mean squared error.

\begin{longtable}{llllrr}
\toprule
     scenario & covariates & prognostic score &  interaction &      mse &       se \\
\midrule
\endfirsthead

\toprule
     scenario & covariates & prognostic score &  interaction &      mse &       se \\
\midrule
\endhead
\midrule
\multicolumn{6}{r}{{Continued on next page}} \\
\midrule
\endfoot

\bottomrule
\endlastfoot
     baseline &      False &             None &         True & 7.64e-02 & 1.08e-03 \\
     baseline &      False &             None &        False & 7.64e-02 & 1.08e-03 \\
     baseline &      False &        Estimated &         True & 1.76e-02 & 2.46e-04 \\
     baseline &      False &        Estimated &        False & 1.75e-02 & 2.45e-04 \\
     baseline &      False &           Oracle &         True & 7.69e-03 & 1.09e-04 \\
     baseline &      False &           Oracle &        False & 7.69e-03 & 1.09e-04 \\
     baseline &       True &             None &         True & 5.07e-02 & 7.18e-04 \\
     baseline &       True &             None &        False & 5.04e-02 & 7.14e-04 \\
     baseline &       True &        Estimated &         True & 1.74e-02 & 2.46e-04 \\
     baseline &       True &        Estimated &        False & 1.73e-02 & 2.44e-04 \\
     baseline &       True &           Oracle &         True & 7.85e-03 & 1.11e-04 \\
     baseline &       True &           Oracle &        False & 7.85e-03 & 1.11e-04 \\
   surrrogate &      False &             None &         True & 7.47e-02 & 1.05e-03 \\
   surrrogate &      False &             None &        False & 7.47e-02 & 1.05e-03 \\
   surrrogate &      False &        Estimated &         True & 4.05e-02 & 5.69e-04 \\
   surrrogate &      False &        Estimated &        False & 4.03e-02 & 5.66e-04 \\
   surrrogate &      False &           Oracle &         True & 8.25e-03 & 1.18e-04 \\
   surrrogate &      False &           Oracle &        False & 8.24e-03 & 1.18e-04 \\
   surrrogate &       True &             None &         True & 5.03e-02 & 7.09e-04 \\
   surrrogate &       True &             None &        False & 5.00e-02 & 7.04e-04 \\
   surrrogate &       True &        Estimated &         True & 3.75e-02 & 5.27e-04 \\
   surrrogate &       True &        Estimated &        False & 3.72e-02 & 5.23e-04 \\
   surrrogate &       True &           Oracle &         True & 8.41e-03 & 1.20e-04 \\
   surrrogate &       True &           Oracle &        False & 8.41e-03 & 1.20e-04 \\
      shifted &      False &             None &         True & 7.65e-02 & 1.10e-03 \\
      shifted &      False &             None &        False & 7.65e-02 & 1.10e-03 \\
      shifted &      False &        Estimated &         True & 6.79e-02 & 9.62e-04 \\
      shifted &      False &        Estimated &        False & 6.79e-02 & 9.62e-04 \\
      shifted &      False &           Oracle &         True & 8.20e-03 & 1.15e-04 \\
      shifted &      False &           Oracle &        False & 8.20e-03 & 1.15e-04 \\
      shifted &       True &             None &         True & 5.03e-02 & 7.11e-04 \\
      shifted &       True &             None &        False & 5.00e-02 & 7.05e-04 \\
      shifted &       True &        Estimated &         True & 4.91e-02 & 6.97e-04 \\
      shifted &       True &        Estimated &        False & 4.86e-02 & 6.90e-04 \\
      shifted &       True &           Oracle &         True & 8.34e-03 & 1.17e-04 \\
      shifted &       True &           Oracle &        False & 8.34e-03 & 1.17e-04 \\
       strong &      False &             None &         True & 7.73e-02 & 1.08e-03 \\
       strong &      False &             None &        False & 7.73e-02 & 1.08e-03 \\
       strong &      False &        Estimated &         True & 1.85e-02 & 2.65e-04 \\
       strong &      False &        Estimated &        False & 1.85e-02 & 2.64e-04 \\
       strong &      False &           Oracle &         True & 8.16e-03 & 1.16e-04 \\
       strong &      False &           Oracle &        False & 8.16e-03 & 1.16e-04 \\
       strong &       True &             None &         True & 5.14e-02 & 7.18e-04 \\
       strong &       True &             None &        False & 5.11e-02 & 7.13e-04 \\
       strong &       True &        Estimated &         True & 1.84e-02 & 2.62e-04 \\
       strong &       True &        Estimated &        False & 1.82e-02 & 2.59e-04 \\
       strong &       True &           Oracle &         True & 8.33e-03 & 1.18e-04 \\
       strong &       True &           Oracle &        False & 8.32e-03 & 1.18e-04 \\
       linear &      False &             None &         True & 3.49e-02 & 4.83e-04 \\
       linear &      False &             None &        False & 3.49e-02 & 4.83e-04 \\
       linear &      False &        Estimated &         True & 9.64e-03 & 1.38e-04 \\
       linear &      False &        Estimated &        False & 9.64e-03 & 1.38e-04 \\
       linear &      False &           Oracle &         True & 8.20e-03 & 1.16e-04 \\
       linear &      False &           Oracle &        False & 8.20e-03 & 1.16e-04 \\
       linear &       True &             None &         True & 8.37e-03 & 1.18e-04 \\
       linear &       True &             None &        False & 8.37e-03 & 1.18e-04 \\
       linear &       True &        Estimated &         True & 8.39e-03 & 1.19e-04 \\
       linear &       True &        Estimated &        False & 8.39e-03 & 1.19e-04 \\
       linear &       True &           Oracle &         True & 8.37e-03 & 1.18e-04 \\
       linear &       True &           Oracle &        False & 8.37e-03 & 1.18e-04 \\
heterogeneous &      False &             None &         True & 5.54e-02 & 7.76e-04 \\
heterogeneous &      False &             None &        False & 5.54e-02 & 7.76e-04 \\
heterogeneous &      False &        Estimated &         True & 2.30e-02 & 3.23e-04 \\
heterogeneous &      False &        Estimated &        False & 2.32e-02 & 3.25e-04 \\
heterogeneous &      False &           Oracle &         True & 2.29e-02 & 3.20e-04 \\
heterogeneous &      False &           Oracle &        False & 2.32e-02 & 3.24e-04 \\
heterogeneous &       True &             None &         True & 2.99e-02 & 4.30e-04 \\
heterogeneous &       True &             None &        False & 2.98e-02 & 4.29e-04 \\
heterogeneous &       True &        Estimated &         True & 2.13e-02 & 3.01e-04 \\
heterogeneous &       True &        Estimated &        False & 2.19e-02 & 3.08e-04 \\
heterogeneous &       True &           Oracle &         True & 1.89e-02 & 2.69e-04 \\
heterogeneous &       True &           Oracle &        False & 1.98e-02 & 2.81e-04 \\
\end{longtable}

\begin{figure}[h]
\centering
\includegraphics[width=1\textwidth]{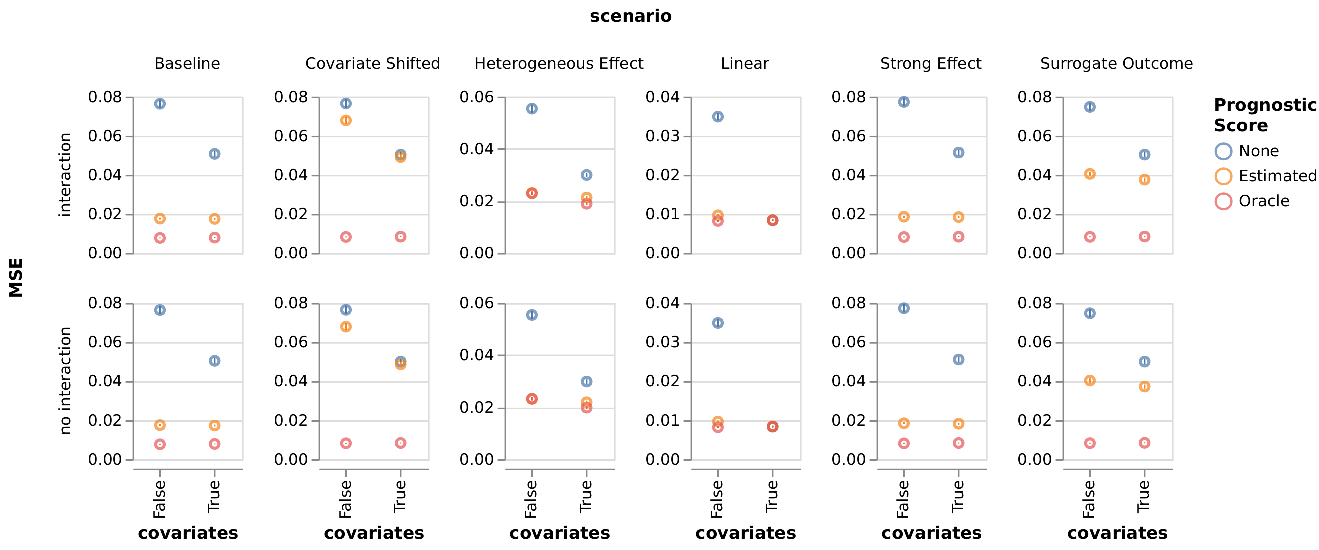}
\caption{Visualization of the simulation results presented in tabular form above.}
\end{figure}

\section{Covariates in the empirical demonstration dataset}

\begin{table}[h!]
\centering
\begin{tabular}{ |r|l| } 
\hline
Covariate & Description \\ 
\hline
AChEI or Memantine usage & Whether a subject is using a class of symptomatic Alzheimer's drugs \\
ADAS Commands & Assesses the subject's ability to follow commands \\
ADAS Comprehension & Assesses the subject's ability to understand spoken language \\
ADAS Construction & Assesses the subject's ability to draw basic figures \\
ADAS Ideational & Assesses the subject's ability to carry out a basic task \\
ADAS Naming & Assesses the subject's ability to name common objects \\
ADAS Orientation & Assesses the subject's knowledge of time and place \\
ADAS Remember Instructions & Assesses the subject's ability to remember test instructions \\
ADAS Spoken Language & Assesses the subject's ability to speak clearly \\
ADAS Word Finding & Assesses the subject's word finding in speech \\
ADAS Word Recall & Assesses the subject's ability to recall a list of words \\
ADAS Word Recognition & Assesses the subject's ability to remember and identify words \\
Age & Subject age at baseline \\
ApoE e4 Allele Count & The number of ApoE e4 alleles a subject has (0, 1, or 2) \\
CDR Community & Assesses the subject's engagement in community activities \\
CDR Home and Hobbies & Assesses the subject's engagement in home and personal activities \\
CDR Judgement & Assesses the subject's judgement skills \\
CDR Memory & Assesses the subject's memory \\
CDR Orientation & Assesses the subject's knowledge of time and place \\
CDR Personal Care & Assesses the subject's ability to care for themselves \\
Diastolic blood pressure & The diastolic blood pressure of a subject \\
Education (Years) & The number of years of education of a subject \\
Heart Rate & The resting heart rate of a subject \\
Height & The height of a subject \\
Indicator for Clinical Trial & 1 if the subject is in an RCT, 0 if not \\
MMSE Attention and Calculation & Assesses the subject's attention and calculation skills \\
MMSE Language & Assesses the subject's language skills \\
MMSE Orientation & Assesses the subject's knowledge of place and time \\
MMSE Recall & Assesses the subject's ability to remember prompts \\
MMSE Registration & Assesses the subject's ability to repeat prompts \\
Region: Europe & 1 if the subject lives in Europe, 0 otherwise \\
Region: Northern America & 1 if the subject lives in the US or Canada, 0 otherwise \\
Region: Other & 1 if the subject lives outside of Europe / US / Canada, 0 otherwise \\
Serious Adverse Events & The number of serious adverse events reported \\
Sex & 1 if female, 0 if male \\
Systolic Blood Pressure & The systolic blood pressure of a subject \\
Weight & The weight of a subject \\
\hline
\end{tabular}
\caption{Baseline covariates in the DHA study and ADNI/CPAD historical training data.}
\end{table}

\end{document}